\documentclass{article}

\usepackage[margin=1in]{geometry}
\usepackage{microtype}
\usepackage{graphicx}
\usepackage{subcaption}
\usepackage{booktabs}
\usepackage{multirow}
\usepackage{enumitem}
\usepackage{hyperref}
\usepackage{makecell}
\usepackage{natbib}
\usepackage{textcomp}

\usepackage{amsmath}
\usepackage{amssymb}
\usepackage{amsthm}
\usepackage[ruled]{algorithm2e}
\usepackage{graphicx}
\usepackage{xcolor}
\usepackage{wrapfig}
\usepackage[thinlines]{easytable}
\usepackage{titlesec}
\usepackage{subcaption}
\usepackage{mathtools}
\usepackage{bold-extra}
\usepackage{bm}
\usepackage{booktabs}
\usepackage{cancel}
\usepackage[font=footnotesize]{caption}
\usepackage{verbatim}

\newtheorem{theorem}{Theorem}
\newtheorem{lemma}{Lemma}

\newtheorem{proposition}{Proposition}

\newtheorem{assumption}{Assumption}

\newcommand{\norm}[1]{\left|\left|#1\right|\right|}
\DeclareMathOperator*{\argmax}{argmax}
\DeclareMathOperator*{\argmin}{argmin}

\newcommand{\E}[2]{\mathbb{E}_{#1}\left[#2\right]}

\newcommand{\ind}[1]{\mathbf{1}\left\{#1\right\}}

\newcommand{\supp}{\text{supp}}

\newcommand{\loss}[0]{\ell_\theta}
\newcommand{\gtilde}[0]{\widetilde{g}}
\newcommand{\indep}[0]{\perp \!\!\! \perp}

\newcommand{\dhat}[0]{\hat{\delta}}
\newcommand{\hatfkliep}[0]{\hat{f}_{\mathrm{KLIEP}}}

\newcommand{\Enoisy}[2]{\mathbb{E}^{\sigma}_{#1}\left[#2\right]}
\newcommand{\gbar}[0]{\bar{g}}
\newcommand{\Lgbar}[0]{\mathcal{L}_{\bar{g}}}

\newcommand{\dbar}[0]{\bar{\delta}}
\newcommand{\Mhat}[0]{\hat{M}}
\newcommand{\I}[2]{\mathlarger{\int_{#1}^{#2}}}
\newcommand{\lt}{\left(}
\newcommand{\rt}{\right)}
\newcommand{\ra}{\rightarrow}
\newcommand{\defeq}{:=}

\usepackage{xspace}
\newcommand{\cbiw}{{\sc CBIW}\xspace}
\newcommand{\iw}{{\sc IW}\xspace}
\newcommand{\mandoline}{{\sc Mandoline}\xspace}

\newcommand{\para}[1]{\noindent{\bf #1}}

\newcommand{\X}[0]{\mathcal{X}}
\newcommand{\p}[0]{\mathcal{P}}

\newcommand{\Y}[0]{\mathcal{Y}}

\newcommand{\D}[0]{\mathcal{D}}
\newcommand{\N}[0]{\mathcal{N}}

\newcommand{\R}[0]{\mathbb{R}}
\newcommand{\dx}[0]{\mathrm{d} x}
\newcommand{\dy}[0]{\mathrm{d} y}
\newcommand{\Lt}[0]{\mathcal{L}_t}
\newcommand{\Lg}[0]{\mathcal{L}_g}
\newcommand{\Lghat}[0]{\hat{\mathcal{L}}_g}

\newif\ifsinglecolumn

\newcommand{\name}{\mandoline}

\title{\name: Model Evaluation under Distribution Shift}

\begin{document}

\author{%
  Mayee Chen*, Karan Goel*, Nimit Sohoni*, \\
  Fait Poms, Kayvon Fatahalian, Christopher R\'e \\
  \emph{Stanford University} \\
  \small mfchen@stanford.edu, kgoel@cs.stanford.edu, nims@stanford.edu, \\
  \small fpoms@cs.stanford.edu, kayvonf@cs.stanford.edu, chrismre@cs.stanford.edu
}

\maketitle

\begin{abstract}
Machine learning models are often deployed in different settings than they were trained and validated on, posing a challenge to practitioners who wish to predict how well the deployed model will perform on a target distribution.
If an unlabeled sample from the target distribution is available, along with a labeled sample from a possibly different source distribution, standard approaches such as importance weighting can be applied to estimate performance on the target.
However, importance weighting struggles when the source and target distributions have non-overlapping support or are high-dimensional.
Taking inspiration from fields such as epidemiology and polling, we develop \name, a new evaluation framework that mitigates these issues. 
Our key insight is that practitioners may have prior knowledge about the ways in which the distribution shifts, which we can use to better guide the importance weighting procedure.
Specifically, users write simple ``slicing functions''---noisy, potentially correlated binary functions intended to capture possible axes of distribution shift---to compute reweighted performance estimates.
We further describe a density ratio estimation framework for the slices and show how its estimation error scales with slice quality and dataset size.
Empirical validation on NLP and vision tasks shows that \name can estimate performance on the target distribution up to $3\times$ more accurately compared to standard baselines.
\end{abstract}

{\let\thefootnote\relax \footnotetext{*Equal contribution.} }

\setlength{\parskip}{0.25em}

\section{Introduction}
Model evaluation is central to the machine learning (ML) pipeline. 
Ostensibly, the goal of evaluation is for practitioners to determine if their models will perform well when deployed. 
Unfortunately, standard evaluation falls short of this goal on two counts. 
First, evaluation data is frequently from a different distribution than the one on which the model will be deployed, for instance due to data collection procedures or distributional shifts over time.
Second, practitioners play a passive role in evaluation, which misses an opportunity to leverage their understanding of what distributional shifts they expect and what shifts they want their model to be robust to. 

By contrast, fields such as polling~\citep{isakov2020towards} and epidemiology~\citep{austin2011an} ``adjust'' evaluation estimates to account for such shifts using techniques such as propensity weighting, correcting for differences between treatment and control groups in observational studies~\citep{rosenbaum1983central, d1998propensity}.

\begin{figure*}[t!]
    \centering
    \includegraphics[width=\linewidth]{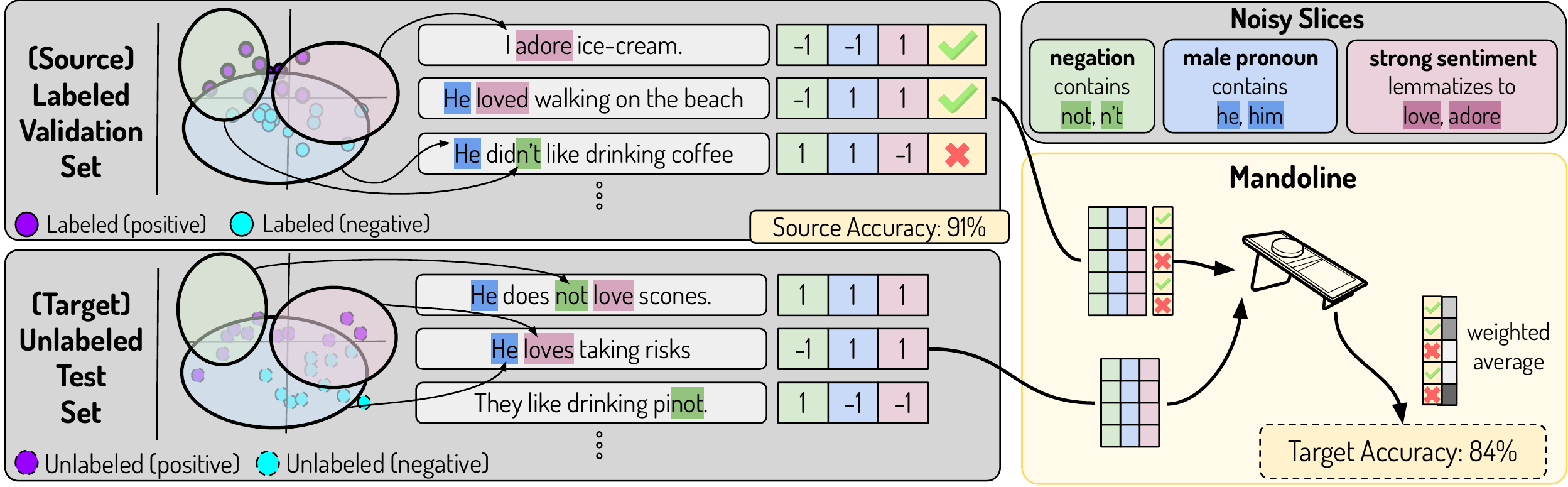}
    \caption[\name\ Schematic]{
    Schematic outlining the \name workflow for a sentiment analysis task. \textit{(left)} Given labeled source and unlabeled target data, users write noisy, correlated slicing functions (indicated by color) to create a slice representation for the datasets. \textit{(right)} \name uses these slices to perform density ratio estimation. Then, it uses these ratios to output a performance estimate for the target dataset by reweighting the source data.}
    \label{fig:splash}
\end{figure*}

Taking inspiration from this, we develop \name (Figure~\ref{fig:splash}), a user-guided, theoretically grounded framework for evaluating ML models.
Using a labeled validation set from a source distribution and an \emph{unlabeled} set from a target (test) distribution of interest, \name computes performance estimates for the target distribution using adjusted estimates on the validation set.
%


A central challenge is how to account for the shift between source and target distributions when adjusting estimates. 
%
%
A straightforward approach is to use importance weighting (\iw)---estimating the density ratio between source and target data to adjust performance.
\iw works well when the source and target distribution overlap significantly, but performs poorly when the distributions' supports have little overlap, as is common under distribution shift. 
%
%
%
Additionally, \iw works well in low dimensions but struggles with large variances of estimated ratios in high-dimensional settings~\citep{pmlr-v89-stojanov19a}.

%

Our key insight is that practitioners can use their understanding to identify the axes along which the distributions may have changed. 
%
Practitioners commonly express this knowledge programmatically by grouping (``slicing'') data along such axes for analysis. 
%
For example, in sentiment analysis a heuristic may use the word ``not'' to detect sentence negation  (Figure~\ref{fig:splash}). 
Or, slices can identify critical data subsets (e.g. X-rays of critically ill patients with chest drains~\citep{oakden2019hidden} or demographic slices when detecting toxic comments~\citep{borkan2019nuanced}).

\name leverages precisely this information:
 users construct ``slicing functions'' on the data---either programmatically mapping examples to slices, or using metadata associated with examples to group them.
These slices create a common representation in which to project and compare source and target data, reducing dimensionality and mitigating support shift. 
%
Compared to standard \iw, \name simplifies density ratio estimation by relying on source and target slice statistics, rather than raw features. 

When the slices accurately capture the relevant distribution shift, \name can be instantiated with many standard density ratio estimation methods on the slices. 
%
However, since practitioners rely on heuristics to write slicing functions, slices can be noisy, correlated, and incomplete, and direct density ratio estimation does not handle this possible misspecification.
We thus represent the distribution of slices as a graphical model to incorporate knowledge of their correlations and incompleteness, and provide a novel extension of the LL-KLIEP method~\citep{Tsuboi2009} that can denoise the density ratio based on the practitioner's confidence and prior knowledge of slice quality. These density ratios are then used to generate importance weights for evaluating models.

Theoretically, we provide a bias-variance decomposition of the error of our reweighting approach. The bias depends on how well the user-specified slices capture distribution shift, and the variance depends on distributional properties and the amount of data.
Compared to standard \iw, which lacks theoretical guarantees in the presence of support shift,
\name~can still obtain error bounds given the assumptions in Section~\ref{sec:alg} (which do not preclude support shift).


Empirically, we verify that \name outperforms standard baselines for density ratio estimation on both synthetic and real-world tasks from natural language processing and computer vision. 
When slices completely capture shifts without noise, \name reduces estimation error by up to $3\times$ compared to standard \iw baselines. Even with noisy slices, \name exhibits little performance degradation. When slices are underspecified and noisy and do not completely capture large distributional shifts, \name continues to outperform baselines by up to $2\times$. 

We conclude with a discussion of slice design. We explore an extremely challenging underspecified setting, wherein we highlight how clever slice design can reduce estimation error by $2.86\times$, and show that \name can flexibly incorporate a strong \iw baseline as a slice to match its performance. We also show that \name can closely estimate performance drops as large as $23$ accuracy points in an ``in-the-wild'' sentiment analysis setting, which shows the potential of automatic slice function design.\footnote{Code for \name\ can be found at \url{https://github.com/HazyResearch/mandoline}.}

\section{Background}
%
We first provide background on importance weighting and density ratio estimation (Section~\ref{sec:IW}), and the challenges these approaches face.\footnote{A broader discussion of related work is provided in Appendix \ref{app:ext-related}.}
We then provide a formal setup of our problem of evaluating models under distribution shift (Section~\ref{sec:problem}).

\noindent{\bf Notation.} $\p_s$ and $\p_t$ are source and target distributions with respective densities $p_s$ and $p_t$. $\mathbb{E}_s$ and $\mathbb{E}_t$ are expectations with respect to $\p_s$ and $\p_t$. When a statement applies to both distributions, we refer to their densities collectively as $p$.

\subsection{Importance Weighting} \label{sec:IW}

Importance weighting~\citep{Horvitz1952AGO} is a general approach for estimating properties of a target random variable $X$ drawn from $\p_t$ given samples $\{x_i\}_{i = 1}^n$ from a different source distribution $\p_s$. 
Since $\E{s}{\tfrac{p_t(X)}{p_s(X)} f(X)} = \E{t}{f(X)}$ for any function $f$ when $\supp(\p_t) \subseteq \supp(\p_s)$, one can estimate $ \E{t}{f(X)}$ with the empirical average $\tfrac{1}{n} \sum_{i=1}^n\tfrac{p_t(x_i)}{p_s(x_i)} f(x_i)$. Typically $p_s$ and $p_t$ are unknown, so the density ratio $\frac{p_t(X)}{p_s(X)}$ must also be estimated.

\noindent\textbf{Density ratio estimation.} The challenge of how to estimate density ratios is well-studied. 
Estimation of the individual densities to compute the ratios is possible but can lead to poor estimates in high dimensions~\citep{Kpotufe2017LipschitzDS}. 
Instead, most approaches estimate the ratio directly. We discuss several common methods below, although they can all be generalized to fitting the density ratio under the Bregman divergence~\citep{Sugiyama2012DensityratioMU}.
First, classifier-based approaches (CBIW) use a Bayesian ``density-ratio trick''~\citep{Hastie2001TheEO,Sugiyama2012DensityRE,Mohamed2016LearningII} to cast estimation as binary classification between samples from the source $(z=0)$ and target $(z=1)$ distributions. The learned weights $\frac{p(z = 1|x)}{p(z=0 | x)}$ are then rescaled to produce a ratio estimate. 
%
%
Kernel mean matching (KMM) matches moments from $\p_t$ to a (parameterized) transformation of the moments of $\p_s$ in a Reproducing Kernel Hilbert Space (RKHS) e.g. with the Gaussian kernel~\citep{Gretton2009CovariateSB}. 
Least-squares importance fitting (LSIF) directly fits the density ratio by minimizing the squared error of a parametrized ratio $r_\phi(x)$ (typically linear or kernel model) compared to $\frac{p_t(x)}{p_s(x)}$~\citep{JMLR:v10:kanamori09a}. 
Finally, the Kullback-Leibler importance estimation procedure (KLIEP) uses $r_\phi(x)$ to construct an approximate distribution $\hat{\p}_t = r_\phi \p_s$ and minimizes the KL-divergence between $\hat{\p}_t$ and $\p_t$~\citep{sugiyama2008direct}. 
%

%
%



\noindent\textbf{Challenges for IW.}
A common problem when applying \iw is \emph{high-variance weights}, which result in poor performance both theoretically and in practice~\citep{cortes}. 
While simple techniques such as self-normalization and weight clipping can be used to reduce variance~\citep{Grover2019BiasCO}, these heuristics do not address the cause of the variance.
Instead, we highlight and address two challenges that underlie this problem in \iw---learning from high-dimensional, complex data and handling support shift:

\begin{enumerate}
    \item \it{High-dimensional data.} Dealing with high-dimensional data is challenging in density ratio estimation, as it is difficult to find well-specified model classes (for CBIW) or data representations (for KMM, LSIF, KLIEP). 
    To remedy this, dimensionality reduction can be used when the distribution shift is restricted to some low-dimensional structured subspace~\citep{SUGIYAMA201044}, but these approaches generally assume a linear subspace. 
    \item \it{Support shift.} When there exists some $x$ such that $p_s(x) > 0$ but $p_t(x) = 0$, then $\frac{p_t(x)}{p_s(x)} = 0$. This point is essentially discarded, which reduces the effective number of samples available for \iw, and points in the intersection of the support may also have low $p_s(x)$, which results in overweighting a few samples. When there exists some $x$ such that $p_s(x) = 0$ but $p_t(x) > 0$, this $x$ will never be considered in the reweighting---in fact, $\E{s}{\frac{p_t(X)}{p_s(X)}f(X)}$ will \emph{not} equal $\E{t}{f(X)}$ in this case, rendering IW ineffective in correcting distribution shift. We describe these phenomena as support shift.
\end{enumerate}

\subsection{Problem Formulation} \label{sec:problem}
We are given a fixed model $f_\theta : \X \rightarrow \Y$, a labeled ``validation'' \emph{source} dataset $\D_s = \{(x^s_i, y^s_i)\}_{i=1}^{n_s}$, and an unlabeled \emph{target} dataset $\D_t = \{x^t_i\}_{i=1}^{n_t}$. $\X, \Y$ denote the domains of the $x$'s and $y$'s, and $\D_t$ and $\D_s$ are drawn i.i.d from $\p_t(\cdot | Y)$ and $\p_s$, respectively. We assume there is no concept drift between the distributions, meaning $p_t(Y | X) = p_s(Y | X)$. Define $\loss: \X \times \Y \rightarrow \R$ to be a metric of performance of $f_\theta$. Our goal is to evaluate performance on the target population as $\Lt = \E{t}{\ell_\theta(X, Y)}$ using labeled samples from $\D_s$ and unlabeled samples from $\D_t$. (In our experiments, we use $\ell_\theta(X, Y) = \mathbf{1}(f_\theta(X) = Y)$ unless otherwise specified; i.e., our goal is to estimate the performance of classification model $f_\theta$ on the target dataset.)

Standard IW can estimate $\Lt$ using the density ratio of the features as $\frac{1}{n_s} \sum_{i = 1}^{n_s} \frac{p_t(x_i^s)}{p_s(x_i^s)} \ell_\theta(x_i^s, y_i^s)$ using the assumption of no concept drift. However, IW has shortcomings (as discussed in Section \ref{sec:IW}).
For instance, it is often \emph{beneficial} to ignore certain features, as the below example illustrates.

\paragraph{Example 2.3}
\label{ex:iw}
Suppose $p_s(x_1, x_2) \propto \phi_{(\mu_1, \sigma_1^2)}(x_1) \cdot \mathbf{1}(x_2 \in (0, 1))$ and $p_t(x_1, x_2) \propto  \phi_{(\mu_2, \sigma_2^2)}(x_1) \cdot \mathbf{1}{(x_2 \in (-1, 0))}$, where $\phi_{\mu,\sigma^2}$ is the $\N(\mu, \sigma^2)$ density. So, $x_1$ is normal while $x_2$ is uniformly distributed on $(0, 1)$ and $(-1, 0)$ under $p_s$ and $p_t$ respectively, and $x_1, x_2$ are independent. Suppose $\loss(x, y)$ is independent of $x_2$. The IW estimate of $\E{t}{\loss(X, Y)}$ will always be zero since $p_t(x) = 0$ for all $x$ in the support of $p_s$. IW fails due to support shift of the irrelevant feature $x_2$, but this can be mitigated by weighting only on $x_1$. This example helps motivate our framework.


\section{The \name Framework} \label{sec:alg}

We present a framework of assumptions (Section~\ref{subsec:model-dist-shift}) on $\p_s$ and $\p_t$ that motivate user-specified \textit{slicing functions}, which intend to capture relevant distribution shift. 
Under these assumptions, we show that weighting based on accurate slicing functions is equivalent to weighting based on features, but mitigates the challenges that standard \iw faces by ignoring irrelevant and non-shifting components of the distributions (Section~\ref{subsec:iw-sfs}). 
We then present a novel density ratio estimation algorithm based on KLIEP (Section~\ref{subsec:density-ratio-estimation}) that accounts for noisy slices. 

\subsection{Modeling Distribution Shift}
\label{subsec:model-dist-shift}

Assume that each sample $X$ can be represented via mappings to four sets of variables $g(X)$, $h(X)$, $a(X)$, $b(X)$.
This categorizes information about the data depending on if it is relevant to the learning task and if its distribution changes between $\p_s$ and $\p_t$. $g(X)$ contains relevant properties of the data that are known to the user and undergo distribution shift. $h(X)$ represents ``hidden'' properties of the data that are also relevant and shifting, but which the user fails to model. $a(X)$ corresponds to the properties that exhibit shift but are irrelevant to the task. Lastly, $b(X)$ are the properties that do not undergo any shift. We 
state these assumptions formally below.

\begin{assumption}[Shift and relevance of data properties]
\label{assump:mand}
\leavevmode
\begin{enumerate}
    \item Representation of $X$ using $g, h, a, b$: $p_s(X | g, h, a, b) = p_t(X | g, h, a, b) = 1$; in other words, $X$ is (almost surely) fully known given $g,h,a,b$.
    \label{assump:rep}
    \item Shift along $g, h, a$ only: $p_s(X | g, h, a) = p_t(X | g, h, a)$.
    \label{assump:shift}
    \item No support shift on $g$: $p_s(g) = 0 \Rightarrow p_t(g) = 0$.
    \label{assump:supp}
    \item Irrelevance of $a(X)$ to other features and true/predicted labels: $a \indep b\, |\, g, h$, $a \indep Y\, |\, g, h, b$, and $a \indep \loss(X,Y) \, |\, g, h, b$, for both $\p_s$ and $\p_t$.
    \label{assump:rel}
\end{enumerate} 
\label{a:distributions}
\end{assumption}

\noindent $g(X)$ encapsulates the user's beliefs of what axes the shift between $\p_s$ and $\p_t$ occurs on. Since $g(X)$ may be difficult to model precisely, the user approximates them by designing $k$ \textit{slicing functions} $\gtilde(X) = \{\gtilde_1(X), \dots, \gtilde_k(X) \}$, where each $\gtilde_i: \X \rightarrow \{-1, 1\}$ noisily captures an axis $g_i(X)$ via a binary decision. When $h(X)$ is empty, we say that $\gtilde(X)$ is \emph{fully specified}, and otherwise that $\gtilde(X)$ is \emph{underspecified}. When $\gtilde(X) = g(X)$, we say that the slices are \emph{perfect} and otherwise \emph{noisy}. 

We note that the decomposition $g,h,a,b$ need not be unique, and a decomposition of this form always exists: for instance, we can trivially set $h(X) = X$ and $g,a,b$ to be empty. However, to obtain better guarantees, $h$ should ideally contain ``as little information as possible.''


\subsection{Importance Weighting Based on Slicing Functions}
\label{subsec:iw-sfs}
Based on these assumptions, weighting using the relevant shifting properties $g$ and $h$ is sufficient. We state this more formally in Proposition \ref{prop:ghij}:
\begin{proposition}
By Assumption \ref{a:distributions}, $$\Lt = \E{s}{\frac{p_t(g(X), h(X))}{p_s(g(X), h(X))} \loss(X, Y)}.$$ \label{prop:ghij}
\end{proposition}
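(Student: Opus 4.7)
The plan is to reduce the claim to the standard scalar importance-weighting identity applied to the low-dimensional summary $(g(X), h(X))$ in three stages: (i) condition via the tower property on $(g, h)$; (ii) show that the conditional expectation $\mathbb{E}[\ell_\theta(X, Y) \mid g, h]$ takes the same value under $\mathcal{P}_s$ and $\mathcal{P}_t$; (iii) use change-of-measure on the pair $(g, h)$ alone.

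Stage (ii) is the crux. I will first peel off $a$ by deriving the conditional independence $a \indep \ell_\theta(X, Y) \mid g, h$ under each of $\mathcal{P}_s, \mathcal{P}_t$. This does not follow directly from Assumption~\ref{a:distributions}.\ref{assump:rel}, since the given $a \indep \ell_\theta \mid g, h, b$ carries the extra conditioning on $b$; combining it with $a \indep b \mid g, h$ via a short marginalization over $b$ yields the unconditional-on-$b$ version. Consequently $\mathbb{E}[\ell_\theta \mid g, h, a] = \mathbb{E}[\ell_\theta \mid g, h]$ under each measure. Then I will match the source and target values at the finer conditioning $(g, h, a)$: Assumption~\ref{a:distributions}.\ref{assump:shift} gives $p_s(X \mid g, h, a) = p_t(X \mid g, h, a)$, which together with the no-concept-drift hypothesis $p_s(Y \mid X) = p_t(Y \mid X)$ (applicable because $g, h, a$ are functions of $X$) makes the joint conditional law of $(X, Y)$ given $(g, h, a)$ identical, so $\mathbb{E}_s[\ell_\theta \mid g, h, a] = \mathbb{E}_t[\ell_\theta \mid g, h, a]$. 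Chaining these equalities delivers the agreement of $\mathbb{E}_s[\ell_\theta \mid g, h]$ and $\mathbb{E}_t[\ell_\theta \mid g, h]$.

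For stage (iii), I will write $\mathcal{L}_t = \mathbb{E}_t\!\big[\mathbb{E}_t[\ell_\theta \mid g, h]\big]$, substitute the stage-(ii) equality to replace the inner conditional expectation by its $\mathcal{P}_s$ version, rewrite the outer $\mathbb{E}_t$ as a $\mathcal{P}_s$-expectation against the ratio $p_t(g, h)/p_s(g, h)$, and finally reabsorb the inner conditional expectation using the tower property on $\mathcal{P}_s$. The change of measure on $(g, h)$ requires $p_s(g, h) = 0 \Rightarrow p_t(g, h) = 0$, which is the natural extension of Assumption~\ref{a:distributions}.\ref{assump:supp} to the pair; Assumption~\ref{a:distributions}.\ref{assump:rep} enters only implicitly, ensuring that conditioning on $(g, h, a, b)$ is a genuine decomposition of $X$.

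The main obstacle I anticipate is the first half of stage (ii): extracting the clean independence $a \indep \ell_\theta \mid g, h$ from the two weaker statements in Assumption~\ref{a:distributions}.\ref{assump:rel}. Once that CI manipulation is in hand, the rest of the argument is routine manipulation of conditional expectations and one invocation of the scalar IW identity.
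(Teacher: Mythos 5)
Your proposal is correct, but it takes a genuinely different route from the paper. The paper's proof is a direct density computation: it starts from the right-hand side, writes the integrand as $\tfrac{p_t(g,h)}{p_s(g,h)}\,p_s(y,g,h,a,b)\,\loss$, and converts the factors one at a time---splitting off $p_s(a\mid g,h,b)$ and integrating it out via $a \indep \loss \mid g,h,b$, converting $p_s(y\mid g,h,b)$ to $p_t(y\mid g,h,b)$ via no concept drift and $a\indep Y\mid g,h,b$, and converting $p_s(b\mid g,h)$ to $p_t(b\mid g,h)$ via $a\indep b\mid g,h$ together with Assumption~\ref{a:distributions}.\ref{assump:shift}---until the integrand reassembles into $p_t(x,y)\loss(x,y)$. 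Your argument instead isolates a single key lemma, $\mathbb{E}_s[\loss\mid g,h]=\mathbb{E}_t[\loss\mid g,h]$, and then applies the scalar change-of-measure identity on $(g,h)$; the ``obstacle'' you flag is resolved exactly by the contraction axiom for conditional independence ($a\indep b\mid g,h$ plus $a\indep\loss\mid g,h,b$ gives $a\indep\loss\mid g,h$, hence $\mathbb{E}[\loss\mid g,h,a]=\mathbb{E}[\loss\mid g,h]$ under each measure), which is precisely the ``short marginalization over $b$'' you describe and holds for any distribution. Your version is more modular, makes the invariance of the conditional risk explicit, and is more economical with the hypotheses: you never invoke $a\indep Y\mid g,h,b$ or the representation assumption \ref{assump:rep} in an essential way, whereas the paper leans on both. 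The two arguments share the same implicit requirements---absolute continuity of $p_t(g,h)$ with respect to $p_s(g,h)$ (the paper states Assumption~\ref{a:distributions}.\ref{assump:supp} only for $g$, and divides by $p_s(g,h)$ anyway), and enough overlap in the conditional law of $a$ given $(g,h)$ for the a.s.\ equalities to chain---so you are not losing rigor relative to the paper by proceeding this way.
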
 

\noindent Revisiting Example 2.3 with $g(x) = x_1$, $a(x) = x_2$, and no $h(x)$, $b(x)$, Proposition \ref{prop:ghij} confirms our intuition that weighting on $x_1$ is sufficient and reduces support shift.

If $\gtilde$ is perfect and well-specified, using  $\frac{p_t(\gtilde(x))}{p_s(\gtilde(x))}$ for weighting corrects the distribution shift without being susceptible to extreme shifts in $a(x)$ and the dimensionality and noise added by $b(x)$. In the more frequent case when $h(x)$ is nonempty, reweighting based on $g(x)$ to estimate $\Lt$ already yields a biased approximation $\Lg = \E{s}{\frac{p_t(g(X))}{p_s(g(X))} \loss(X, Y)}$. However, as long as the slices are not noisy, the density ratio methods discussed in Section~\ref{sec:IW} can be applied on the slices with well-studied tradeoffs in computational efficiency, estimation error, and robustness to misspecification~\citep{kanamori2010}. When the slices are noisy, the challenge is to learn weights $w(x) = \frac{p_t(g(x))}{p_s(g(x))}$ on $g$ when we only have $\gtilde$ and our datasets, motivating our algorithm for this particular case.




\subsection{Density Ratio Estimation Approach}
\label{subsec:density-ratio-estimation}
We estimate the density ratio as $\hat{w}(x)$ using $\gtilde$ and our data. We partition $\D_s$ into $\D_{s_1}$ and $\D_{s_2}$ of sizes $n_{s_1}, n_{s_2}$ such that the former is used to learn $\hat{w}(x)$ and the latter is used for evaluation. Our estimate of $\Lt$ is $\Lghat \defeq \frac{1}{n_{s_2}} \sum_{i = 1}^{n_{s_2}} \hat{w}(x_i^{s_2}) \loss(x_i^{s_2}, y_i^{s_2})$. We present a noise-aware extension of LL-KLIEP~\citep{Tsuboi2009} that models $g$ as a latent variable and $p(\gtilde, g)$ as a graphical model. While previous work on latent variable density ratio estimation focuses on the posterior distribution and requires observations of $g$~\citep{liu2020posterior}, our algorithm allows for denoising by incorporating a user's prior knowledge on the quality of $\gtilde$, which can be viewed as hyperparameters of user confidence. 

\noindent\textbf{Graphical Model.} Let a graph $G = (\gtilde, E)$ specify dependencies between the slicing functions using standard notions from the probabilistic graphical models literature~\citep{Lauritzen,koller2009probabilistic}. 
We assume the user knows dependencies between $\gtilde$, although $E$ can be learned~\citep{ravikumar2010, Loh_2013}, and assume each $\gtilde_i$ is connected to at most one other $\gtilde_j$.
For the joint distribution of $(g, \gtilde)$, we augment $G$ by adding an edge from each $\gtilde_i$ to $g_i$ in the following model for both $p_s$ and $p_t$:

{\small
\begin{align}
    p(\gtilde, g; \theta) = \frac{1}{Z_\theta} \exp \bigg[\sum_{i = 1}^k \theta_i g_i + \sum_{i = 1}^k \theta_{ii} g_i \gtilde_i + \sum_{\mathclap{(i, j) \in E}} \theta_{ij} \gtilde_i \gtilde_j \bigg], \label{eq:model_joint}
\end{align}
}

\noindent where $Z_\theta$ is the log partition function. Note that when $\gtilde = g$, this reduces to an Ising model of $g$ with edgeset $E$. In Appendix~\ref{app:alg} we show that the marginal density of $g$ is then
{\small
\begin{align}
    p(g; \psi) = \frac{1}{Z} \exp \bigg[\sum_{i = 1}^k \psi_i g_i + \sum_{(i, j) \in E} \psi_{ij} g_i g_j \bigg]  \label{eq:model_g}
    = \frac{1}{Z} \exp(\psi^\top \phi(g)),
\end{align}
}
\newline
\noindent where $Z$ is a different log partition function, $\phi(g)$ is a representation of the potentials over $g$, and each element of $\psi$ is a function of $\theta$ in \eqref{eq:model_joint}. Define $\delta = \psi_t - \psi_s$ as the difference in parameters of $p_t(g)$ and $p_s(g)$. Under this model, the density ratio to estimate is $w(x) = \frac{p_t(g(x); \psi_t)}{p_s(g(x); \psi_s)} = \exp \big(\delta^\top \phi(g(x))\big) \frac{Z_s}{Z_t}$.

\renewcommand{\E}[2]{\mathbb{E}_{#1}\left[#2\right]}

\noindent \textbf{Latent Variable KLIEP.} Based on the parametric form of $w(x)$, KLIEP aims to minimize the KL-divergence between the target distribution $\p_t$ and an estimated distribution $\hat{\p}_t$ with density $\hat{p}_t(g; \delta) = \exp \big(\delta^\top \phi(g)\big) \frac{Z_s}{Z_t} p_s(g)$. Note that since $\hat{p}_t(g; \delta)$ must be a valid density, the log partition ratio $\frac{Z_s}{Z_t}$ is equal to $\frac{1}{\E{s}{\exp(\delta^\top \phi(g))}}$. Then, minimizing the KL-divergence between $\p_t$ and $\hat{\p}_t$ is equivalent to solving
\begin{align}
    \max\limits_{\delta} \left\{\E{t}{\delta^\top \phi(g)} -  \log \E{s}{\exp(\delta^\top \phi(g))} \right\}.
    \label{eq:obj}
\end{align}




\noindent The true distribution of $g$ is unknown, but we can write \eqref{eq:obj} as $\E{t}{\E{t}{\delta^\top \phi(g) | \gtilde}} - \\
\log \E{s}{\E{s}{\exp(\delta^\top \phi(g)) | \gtilde}}$.
The outer expectation over $\gtilde$ can be approximated empirically, so in place of \eqref{eq:obj} we want to maximize $\frac{1}{n_t} \sum_{i = 1}^{n_t} \E{t}{\delta^\top \phi(g) | \gtilde(x_i^t)} - \log \big(\sum_{j = 1}^{n_{s_1}} \E{s}{\exp(\delta^\top \phi(g)) | \gtilde(x_j^{s_1})} \big)$. 

\noindent \textbf{Noise Correction.} This empirical objective function requires knowledge of $p(g | \gtilde)$, which factorizes as $\prod_{i = 1}^k p(g_i | \gtilde_i)$. This inspires our noise-aware KLIEP approach: users provide simple $2 \times 2$ \textit{correction matrices} $\sigma_s^i, \sigma_t^i$ per slice to incorporate their knowledge of slice quality, where $\sigma_s^i(\alpha, \beta) \approx {p_s(g_i = \alpha | \gtilde_i = \beta)}$
and similarly for $\sigma_t^i$. This knowledge can be derived from prior ``evaluation'' of $\gtilde_i$'s and can also be viewed as a measure of user confidence in their slices. Note that setting each $\sigma^i$ equal to the identity matrix recovers the noiseless case $\gtilde = g$, which is LL-KLIEP. Our convex optimization problem maximizes
{\small
\begin{align}
    \hatfkliep(\delta, \sigma) &= \frac{1}{n_t} \sum_{i = 1}^{n_t} \Enoisy{t}{\delta^\top \phi(g) | \gtilde(x_i^t)} \label{eq:obj_hat}
    - \log \bigg(\sum_{j = 1}^{n_{s_1}} \Enoisy{s}{\exp(\delta^\top \phi(g)) | \gtilde(x_j^{s_1})} \bigg),
\end{align}
}

\noindent where $\Enoisy{s}{r(g) | \gtilde} = \int r(g) \prod_{i = 1}^k \sigma_s^i(g_i, \gtilde_i) dg$ for any function $r(g)$. Define $\dhat = \argmax{\delta}{\hatfkliep(\delta, \sigma)}$. Then the estimated density ratio is $\hat{w}(g)  = \frac{n_{s_1} \exp \left(\dhat^\top \phi(g) \right)}{\sum_{i = 1}^{n_{s_1}} \Enoisy{s}{\exp(\dhat^\top \phi(g)) | \gtilde(x_i^{s_1})}}$, for which we have used an empirical estimate of $\frac{Z_s}{Z_t}$ as well. To produce weights on $\D_s$, we define $\hat{w}(x) = \Enoisy{s}{\hat{w}(g) | \gtilde(x)}$. Our approach is summarized in Algorithm \ref{alg:noise_kliep}. 

\begin{algorithm}[t]
	\caption{\name}
	\label{alg:noise_kliep}
\SetAlgoLined
		\textbf{Input}: Datasets $\D_s, \D_t$; slicing functions $\gtilde: \X \rightarrow \{-1, 1\}^k$, dependency graph $G = (\gtilde, E)$, \newline
		correction matrices $\sigma_s^i$ and $\sigma_t^i$ for each slice \\
		\nl Split $\D_s$ into $\D_{s_1}, \D_{s_2}$ \\
		\nl Use $G$'s edgeset to construct representation function $\phi$ \\
		\nl Solve $\dhat = \argmax_\delta {\hatfkliep(\delta, \sigma)}$ defined in \eqref{eq:obj_hat} \\
		\nl Construct ratio $\hat{w}(g) = \frac{n_{s_1} \exp(\dhat^\top \phi(g))}{\sum_{j = 1}^{n_{s_1}} \Enoisy{s}{\exp(\dhat^\top \phi(g) | \gtilde(x_j^{s_1}))}}$ \\
		\nl \Return{$\Lghat = \frac{1}{n_{s_2}}\sum_{i = 1}^{n_{s_2}} \Enoisy{s}{\hat{w}(g) | \gtilde(x_i^{s_2})} \loss(x_i^{s_2}, y_i^{s_2})$}
\end{algorithm}

Note that our approach can handle \textit{incomplete} slices that do not have full coverage on the dataset. We model $\gtilde_i$ with support $\{-1, 0, 1\}$ where $0$ represents abstention; this can be incorporated into \eqref{eq:model_joint} as described in Appendix~\ref{app:alg}.

\section{Theoretical Analysis} \label{sec:theory}

We analyze the performance of our approach by comparing our estimate $\Lghat$ to the true $\Lt$. We show the error can be decomposed into a bias dependent on the user input and a variance dependent on the true distribution of $g$, noise correction, and amount of data. We provide an error bound that always holds with high probability, in contrast to standard IW for which no generalization bounds hold for certain distributions. 


\noindent Define a ``fake'' $g$ estimated from inaccurate $\sigma$ as $\gbar$, where $p(\gbar) = \int \sigma(g, \gtilde) p(\gtilde) d\gtilde$, and $\Lgbar = \E{s}{w(\gbar(X)) \loss(X, Y)}$, where $w(\gbar(X)) {=} \frac{p_t(\gbar(X))}{p_s(\gbar(X))}$. Then,
\begin{align}
    |\Lt - \Lghat|  \le  \underbrace{|\Lt - \Lg|}_{\mathclap{\text{bias from no $h(X)$}}} \qquad+ \qquad \underbrace{|\Lg - \Lgbar |}_{\mathclap{\text{bias from incorrect $\sigma$}}} \qquad
    +  \qquad \underbrace{|\Lgbar - \mathbb{E}_{s}[\Lghat]|}_{\mathclap{\text{var. from estimated ratio}}} \qquad + \qquad \underbrace{|\mathbb{E}_{s}[\Lghat] - \Lghat|}_{\mathclap{\text{var. from empirical evaluation}}}. \label{eq:decomposition} 
\end{align}


Suppose all slices are noisy and the user fails to correct $k' \le k$ of them. Define $\eta_s^{\min}(i), \eta_s^{\max}(i)$ as bounds on the relative error of $\sigma_s^i$ such that $|\frac{p_s(g_i | \gtilde_i) - \sigma_s^i(g_i, \gtilde_i)}{p_s(g_i | \gtilde_i)}| \in [\eta_s^{\min}(i), \eta_s^{\max}(i)]$ for all $g_i, \gtilde_i$ per slice. $\eta_t^{\min}(i)$ and $\eta_t^{\max}(i)$ are similarly defined per slice for $\p_t$, and define the total correction ratio as $r = \prod_{i = 1}^{k'} \frac{1 + \eta_t^{\max}(i)}{1 - \eta_{s}^{\min}(i)}$. Define an upper bound $M = \sup_X \frac{p_t(g(X))}{p_s(g(X))}$ on the density ratio of $g$, and define $\Mhat = \sup_X \hat{w}(g(X))$. Then we have the following Theorem \ref{thm:main} (proved in Appendix \ref{mand:thmmain_prf}).

\begin{theorem}
Set $n_{s_1} = n_{s_2} = \frac{n_s}{2}$. Under Assumption \ref{a:distributions}, with probability at least $1 - \varepsilon$, the accuracy of our estimate of $\Lt$ via noise-aware reweighting is bounded by
{\footnotesize
\begin{align*}
    &|\Lt - \Lghat| \le \hspace{-0.3em} \I{}{} \hspace{-0.25em} \Big|p_t(h(x)|g(x)) {-} p_s(h(x)|g(x))\Big| \cdot p_t(g(x))\,\dx  
    + rM \sum_{i = 1}^{k'} \lt \tfrac{\eta_t^{\max}(i)}{1 - \eta_t^{\min}(i)} + \tfrac{\eta_s^{\max}(i)}{1 - \eta_s^{\min}(i)}\rt
    + \hat{M} (c_{s, \loss} {+} 1 ) \sqrt{\tfrac{\log \lt \tfrac{4}{\varepsilon}\rt}{n_s}},
\end{align*}
}

where $c_{s, \loss}$ is a constant dependent on the distribution of $\loss$ and $\p_s$, and  $\hat{M} \xrightarrow{p} r M$ as $n_s \wedge n_t \rightarrow \infty$.
\label{thm:main}
\end{theorem}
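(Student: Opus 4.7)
The plan is to bound each of the four terms in the decomposition \eqref{eq:decomposition} separately, matching them to the three summands of the stated bound: the two bias terms yield the integral over $h \mid g$ and the $rM\sum_i(\cdot)$ expression, while the two variance terms jointly produce the concentration term of order $\Mhat\sqrt{\log(4/\varepsilon)/n_s}$ after a union bound (which is where the factor of $4$ inside the log originates).

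For $|\Lt - \Lg|$, I apply Proposition \ref{prop:ghij} to write $\Lt = \E{s}{\tfrac{p_t(g,h)}{p_s(g,h)}\loss(X,Y)}$ and compare against $\Lg = \E{s}{\tfrac{p_t(g)}{p_s(g)}\loss(X,Y)}$. Factoring $\tfrac{p_t(g,h)}{p_s(g,h)} = \tfrac{p_t(g)}{p_s(g)}\cdot \tfrac{p_t(h\mid g)}{p_s(h\mid g)}$, applying the tower rule conditioning on $(g,h)$, and using $\loss \in [0,1]$, the difference collapses to an integral of $|p_t(h\mid g) - p_s(h\mid g)|$ weighted by $p_t(g)$, giving the first summand. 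For $|\Lg - \Lgbar|$, I expand $w(\gbar) = p_t(\gbar)/p_s(\gbar)$ against the true $w(g)$ using $p(\gbar) = \int \sigma(g,\gtilde)p(\gtilde)d\gtilde$. Since $\sigma$ factorizes across slices and only $k'$ are uncorrected, the multiplicative discrepancy between $w(\gbar)$ and $w(g)$ telescopes into a product over those $k'$ slices whose per-slice deviations from $1$ are governed by $\eta_s^{\max}(i)/(1-\eta_s^{\min}(i))$ on the source side and the analogous ratio for the target. Bounding the outer expectation by $M$ and linearizing the product with a first-order expansion (controlled via $r = \prod_i \tfrac{1+\eta_t^{\max}(i)}{1-\eta_s^{\min}(i)}$) yields the second summand $rM \sum_{i=1}^{k'}(\cdot)$.

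Finally, I handle the two variance terms together. Term $|\mathbb{E}_s[\Lghat] - \Lghat|$ is a Hoeffding-type deviation on the $n_{s_2}=n_s/2$ i.i.d.\ summands $\what(x)\loss(x,y) \in [0,\Mhat]$, bounded by $\Mhat\sqrt{\log(2/\varepsilon)/n_s}$ with probability $1-\varepsilon/2$. Term $|\Lgbar - \mathbb{E}_s[\Lghat]|$ reduces via the exponential parametrization of $\what$ to a concentration statement about the noise-aware KLIEP maximizer $\dhat$: strong concavity of $\hatfkliep(\cdot,\sigma)$, a standard log-sum-exp property, converts a uniform gradient deviation on $n_{s_1}=n_s/2$ samples into a rate on $\|\dhat - \dbar\|$, where $\dbar$ is the population optimizer of $\fkliep(\cdot,\sigma)$. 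Propagating this through $\exp(\dhat^\top \phi(g))$ and its empirical normalizer and integrating against $\loss$ gives a contribution of order $\Mhat\, c_{s,\loss}\sqrt{\log(2/\varepsilon)/n_s}$, where $c_{s,\loss}$ absorbs moment-like norms of $\phi(g)$ and $\loss$ under $\p_s$. A union bound combines the two events and $c_{s,\loss}+1$ collects the constants; the asymptotic claim $\Mhat \xrightarrow{p} rM$ then follows from consistency of $\dhat$ for $\dbar$ under the noisy model, since $w(\gbar)$ is precisely the ratio associated with the $\sigma$-corrupted marginal.

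The main obstacle will be making the KLIEP concentration for term 3 fully rigorous, since perturbations in $\dhat$ propagate nonlinearly through both the numerator $\exp(\dhat^\top\phi(g))$ and the empirical denominator $\sum_j \Enoisy{s}{\exp(\dhat^\top\phi(g))\mid \gtilde(x_j^{s_1})}$ before being integrated against $\loss$. Controlling this requires pairing strong concavity of the log-sum-exp objective with a uniform bound on the gradient deviation of $\hatfkliep$; it is precisely this step where $c_{s,\loss}$ emerges naturally as a moment-like quantity combining $\loss$ and $\phi(g)$ against $\p_s$.
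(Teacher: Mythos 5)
Your overall skeleton matches the paper's: the same four-term decomposition \eqref{eq:decomposition}, the same factorization $\tfrac{p_t(g,h)}{p_s(g,h)} = \tfrac{p_t(g)}{p_s(g)}\cdot\tfrac{p_t(h|g)}{p_s(h|g)}$ for the first bias term, the same telescoping-over-slices argument (via the log-sum inequality in the paper) with the expectation bounded by $rM$ for the second, a plain Hoeffding bound on the $n_{s_2}$ evaluation samples for the fourth, and a union bound producing the $\log(4/\varepsilon)$. Where you genuinely diverge is the third term, $|\Lgbar - \mathbb{E}_s[\Lghat]|$. You propose the ``honest'' M-estimation route: strong concavity of $\hatfkliep$, a uniform gradient deviation to get a rate on $\|\dhat - \dbar\|$, and nonlinear propagation through $\exp(\dhat^\top\phi(g))$ and its empirical normalizer. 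The paper never controls $\dhat - \dbar$ at all for this term. Instead it first pays the constant $c_{s,\loss}$ to pass from $|\E{s}{(w(\gbar(X)) - \what(X))\loss(X,Y)}|$ to $c_{s,\loss}\,\E{s}{w(\gbar(X)) - \what(X)}$, and then exploits two exact identities: $\E{s}{w(\gbar(X))} = 1$, and $\E{s}{\Enoisy{s}{\exp(\dhat^\top\phi(g))\,|\,\gtilde(X)}} = \E{s}{\exp(\dhat^\top\phi(\gbar))}$. Since $\what$ is this same quantity divided by its \emph{empirical} average over $\D_{s_1}$, the whole difference collapses to the deviation of the empirical normalizer from its mean, handled by a single Hoeffding bound with range $\exp(\|\dhat\|_1)$; dividing by the empirical normalizer is exactly what manufactures the factor $\Mhat$. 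Consistency of $\dhat$ for $\dbar$ is used only for the asymptotic statement $\Mhat \xrightarrow{p} rM$ (via Proposition 2 and Lemma \ref{lemma:rm}), not for the finite-sample bound. So your route buys rigor the paper's shortcut lacks (the paper's Hoeffding step quietly treats $\dhat$ as independent of the $x_j^{s_1}$, and $c_{s,\loss}$ enters through a rather informal decoupling of $\loss$ from the weight error, not as a moment norm of $\phi$), but it is substantially heavier and would introduce strong-concavity and gradient-variance constants that do not appear in the stated bound; the paper's route is the more elementary one and is what actually yields the theorem as written.
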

\noindent We make two observations about Theorem \ref{thm:main}.

\begin{itemize}
    \item The first two terms are the bias from user input and map to the first two terms in \eqref{eq:decomposition}. The first ``total variation-like'' term is a bias from not modeling $h(X)$, and describes relevant uncaptured distribution shift. The second term describes the impact of inaccurate correction matrices $\sigma_s, \sigma_t$.
    \item The last term maps to the third and fourth terms of \eqref{eq:decomposition} and describes the variance from learning and evaluating on the data, which collectively with $\hat{M}$ scales in $n_t$ and $n_s$. This bound depends on the distributions themselves---critically the upper bound $M$ on the weights---and the accuracies of $\sigma_s, \sigma_t$. 
\end{itemize}

\noindent When the user writes perfect, fully-specified slicing functions, the standing bias of the first two terms in Theorem \ref{thm:main} is $0$, and our estimate $\Lghat$ converges to $\Lt$.

\noindent \textbf{Comparison to standard IW.}
We compare our approach to standard \iw in the feature space using our decomposition framework in \eqref{eq:decomposition}. 
Standard \iw does not utilize slices or user knowledge, so the first two terms of \eqref{eq:decomposition} would be $0$. The latter two terms depend on the magnitude of the true weights, and more generally, their variance. In standard \iw, the variance of the weights can be large and even unbounded due to support shift or high-dimensional complex data. Even simple continuous distributions can yield bad weights; for instance, \cite{cortes} shows a Gaussian distribution shift where the variance of the weights is infinite, leading to inapplicable generalization bounds and poor empirical performance. In contrast, while our approach using $\gtilde$ may incur bias, it will always have weights bounded by at most $\hat{M}$ due to the discrete model and hence have applicable bounds. The variance may also be lower due to how we mitigate problems of support shift and high-dimensional complex data by ignoring $a(X), b(X)$. The estimation error bounds for standard IW scale proportionally to $\sqrt{\sigma^2/n_s}$, where $\sigma^2$ denotes the variance of the importance weights \cite{cortes}; therefore, when this quantity outweighs the bias incurred by \name{} due to underspecified, noisy $\gtilde$, \name{} can reduce the estimation error of $\Lt$ compared to standard IW.

\begin{figure}[t!]
    \centering
    \begin{subfigure}{0.4\linewidth}
    \includegraphics[width=\linewidth]{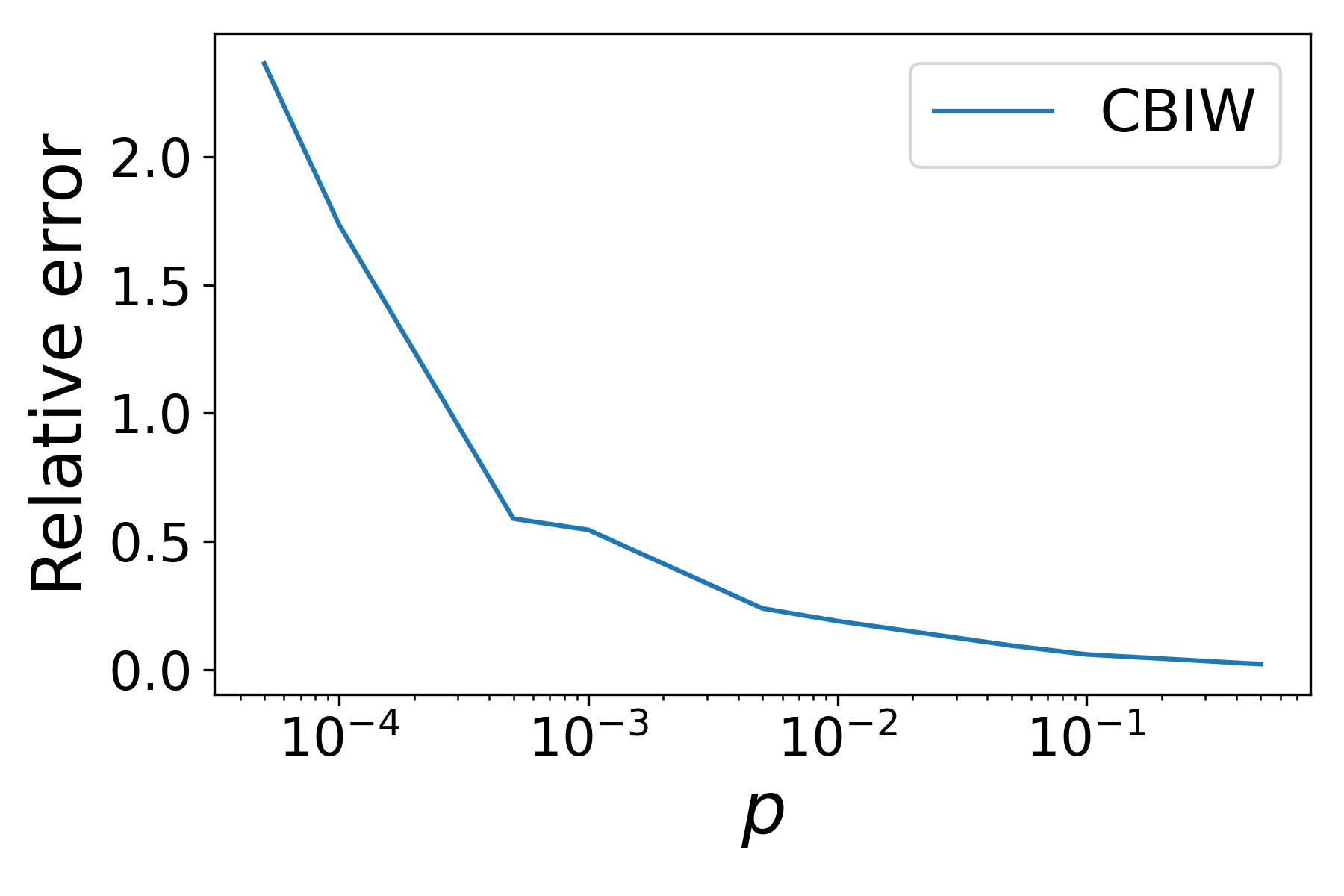}
    \end{subfigure}\qquad %
    \begin{subfigure}{0.4\linewidth}
    \includegraphics[width=\linewidth]{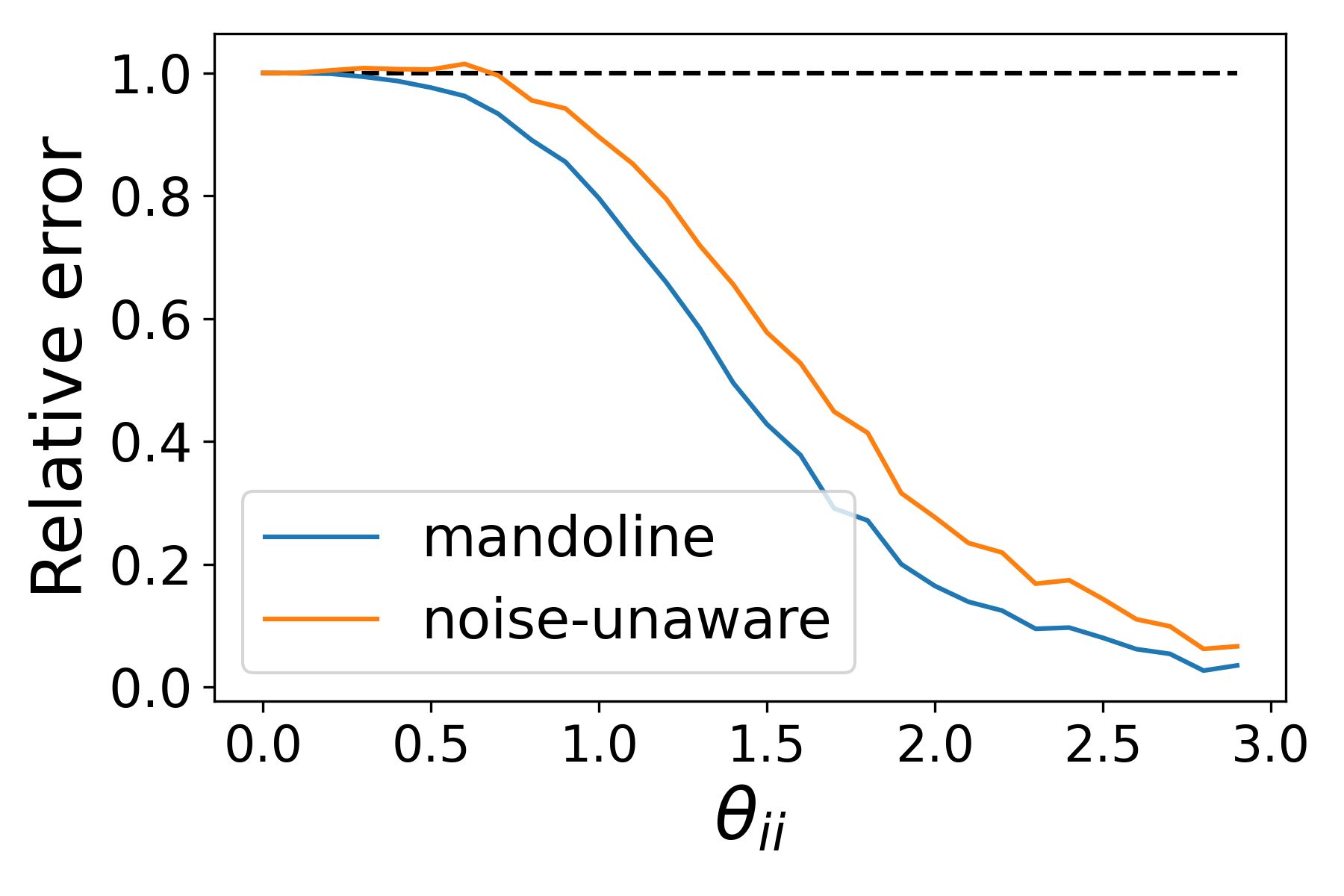}
    \end{subfigure}
    \caption[Synthetic Experiments for CBIW and \name]{\textit{(left)} Relative error of CBIW in the presence of a spurious shifting feature with probability $p$. (\name\ has a relative error of $0.01$.) \textit{(right)} Relative error of \name\ vs. noise-unaware reweighting as correlation between $g$ and $\gtilde$ increases.}
    \label{fig:synthetic}
    
\end{figure}

\subsection{Synthetic Experiments}
\label{sec:synthetics}

In this section, we provide an example where standard importance weighting struggles while \name\ maintains accurate estimates. We then evaluate how \name\ performs as the noise in the $\gtilde_i$'s varies. Additional details and synthetics are included in Appendix~\ref{app:synthetics}.


\noindent{\bf Support Shift.}
Standard \iw struggles when the source and target distributions have little overlap. As described in Example 2.3, the density ratio is 0 or undefined when the supports are disjoint. Moreover, if the source has a low-density region where the target density is high, this effectively reduces the sample size by assigning high weight to a few examples. To illustrate, in Figure 2a we generate data with a single $g_i$ and set $\tilde{g}_i = g_i$. We then append a ``spurious'' feature ($a(X)$) that is 1 with small probability $p$ for source examples, and 1 for \emph{all} target examples. We plot the relative error of CBIW on all features as $p$ decreases to 0; for small values of $p$ this is greater than the difference between source and target. By contrast, \name\ ignores the spurious feature entirely as it is not part of $g$, achieving a relative error of only 0.01.


\noindent{\bf Noisy Slicing Functions.}
In Figure~\ref{fig:synthetic}b, we show that on data generated from \eqref{eq:model_joint}, \name's noise-aware approach does better than the noise-unaware approach (i.e., setting each $\sigma_t^i$, $\sigma_s^i$ to the identity matrix) as we vary the correlation between $g_i$ and $\gtilde_i$ by adjusting $\theta_{ii}$ in \eqref{eq:model_joint}.
\newcommand{\snli}{{\sc SNLI}\xspace}
\newcommand{\hans}{{\sc HANS}\xspace}
\newcommand{\mnli}{{\sc MNLI}\xspace}
\newcommand{\snlip}{{\sc SNLI}\textsuperscript{+}\xspace}
\newcommand{\hansm}{{\sc HANS}\textsuperscript{--}\xspace}
\newcommand{\snliphansm}{{\sc SNLI}\textsuperscript{+}$\rightarrow${\sc HANS}\textsuperscript{--}\xspace}
\newcommand{\celeba}{{\sc CelebA}\xspace}
\newcommand{\civil}{{\sc CivilComments}\xspace}
\newcommand{\snlimnli}{{\sc SNLI}$\rightarrow${\sc MNLI}\xspace}
\newcommand{\snlihans}{{\sc SNLI}$\rightarrow${\sc HANS}\xspace}
\newcommand{\imdb}{{\sc IMDB}\xspace}

\begin{table*}[t!!]
    \centering
    \footnotesize
    \resizebox{\linewidth}{!}{%
    \begin{tabular}{p{0.2\linewidth}|p{0.165\linewidth}|p{0.2\linewidth}|p{0.22\linewidth}|p{0.155\linewidth}|p{0.19\linewidth}}
        \toprule
        {\bf Task} & {\bf Task Labels} & {\bf Distribution Shift} & {\bf Source Data} & {\bf Target Data} & {\bf Slices}  \\
        \midrule
        \makecell[l]{\celeba \\ {\it image classification}} 
        & \makecell[l]{{\it male} \\ {\it vs. female}}
        & \makecell[l]{$\uparrow$ blurry images}
        & \makecell[l]{validation set\\ {\it (1\% blurry)} }
        & \makecell[l]{perturbed test set\\ {\it (30\% blurry)} }
        & \makecell[l]{{\sc metadata labels}\\ {\it blurry / not blurry}}\\
        \cmidrule{2-6}
        \makecell[l]{\civil \\ {\it toxic text classification}} 
        & \makecell[l]{{\it toxic} \\ {\it vs. non-toxic}}
        & \makecell[l]{$\uparrow\downarrow$ identity proportions\\ {\it (e.g. $\uparrow$ female)} }
        & \makecell[l]{validation set }
        & \makecell[l]{perturbed test set }
        & \makecell[l]{{\sc metadata labels}\\ {\it 8 identity groups}} \\
        \cmidrule{2-6}
        \makecell[l]{\snlimnli \\ {\it natural language}\\ {\it inference}} 
        & \makecell[l]{{\it entailment, neutral} \\{\it or contradiction}}
        & \makecell[l]{single-genre $\rightarrow$\\ multi-genre examples}
        & \makecell[l]{\snli validation set }
        & \makecell[l]{\mnli matched\\ validation set }
        & \makecell[l]{{\sc programmatic}\\ {\it task model predictions,}\\ {\it task model entropy}} \\
        \cmidrule{2-6}
        \makecell[l]{\snliphansm \\ {\it natural language}\\ {\it inference}} 
        & \makecell[l]{{\it entailment} \\{\it or non-entailment}}
        & \makecell[l]{$\uparrow$ lexical overlap\\ with label shift }
        & \makecell[l]{\snli validation set + \\ 1\% \hans validation set} 
        & \makecell[l]{99\% \hans \\validation set\\} 
        & \makecell[l]{{\sc programmatic}\\ {\it lexical overlap}\\ {\it noisy non-entailment}\\ {\it sentence structure}\\ {\it task model entropy}}\\
        \bottomrule
    \end{tabular}%
    }
    \caption[\name{} Evaluation Tasks]{Summary of real-world tasks and datasets considered.}
    \label{tab:tasks_and_datasets}
\end{table*}

\section{Experiments on Real Data}
\label{sec:experiments}
We empirically validate that \name can estimate performance under distribution shift on real tasks and datasets.

\noindent {\bf Experimental Claims.} We validate claims about \name~under the experimental settings described below:

\begin{enumerate}
    \item {\bf Fully-specified, with perfect slices (Section~\ref{sec:fullspec-perfectslices}).} In the setting when all the factors of distribution shift are known (i.e., $h(X)$ is empty), and the true slice labels are known via metadata annotations (i.e., $\gtilde = g$), \name reduces the model performance estimation error by up to $3\times$ over baselines.
    \item {\bf Fully-specified, with noisy slices (Section~\ref{sec:fullspec-noisyslices}).} When the slices are noisy / programmatic, but still capture all the shifted variables (i.e., $h(X)$ is still empty but $\gtilde \ne g$), \name's performance does not significantly degrade, and it remains competitive with baselines.
    \item {\bf Underspecified, with noisy slices (Section~\ref{sec:underspec-noisyslices}).} For large distribution shifts where only a subset of relevant shifted variables are captured by the noisy programmatic slices (i.e. $h(X)$ is nonempty and $\gtilde \ne g$), \name reduces estimation error by up to $2\times$ over baselines. 
\end{enumerate}





\noindent {\bf Tasks.} We consider four tasks from computer vision and natural language processing, summarized in Table~\ref{tab:tasks_and_datasets}.

    
    

\newcommand{\kmm}{{\sc KMM}\xspace}
\newcommand{\ulsif}{{\sc uLSIF}\xspace}
\newcommand{\cbiwft}{{\sc CBIW-FT}\xspace}
\newcommand{\llkliep}{{\sc LL-KLIEP}\xspace}
\newcommand{\sest}{{\sc Source}\xspace}
\noindent{\bf Baselines.}
We compare \name against widely used importance weighting baselines on the features.
     {\bf Direct Source Estimation (\sest)} directly uses the estimate from the source distribution for the target.
     {\bf Classifier-Based Importance Weighting (\cbiw)} uses logistic regression on a task-specific feature representation to distinguish samples from the source and target distributions. As noted by \cite{Tsuboi2009}, this baseline returns weights identical to LL-KLIEP on features.
{\bf Kernel Mean Matching (\kmm)} \citep{Gretton2009CovariateSB} solves a quadratic program to minimize the discrepancy in feature expectations between the target and a reweighting of the source distribution. {\bf Unconstrained Least-Squares Importance Fitting (\ulsif)}~\citep{JMLR:v10:kanamori09a} is a stable version of LSIF that solves a least-squares optimization problem for the density ratio using a linear kernel model. We also run the above baselines on the \emph{slices} to compare against \name's instantiation with KLIEP in the Appendix. 

We self-normalize importance weights for all methods to sum to 1 \citep{Owen2013MC}.
Full experimental details for all experiments are provided in Appendix~\ref{app:experiments}.








\subsection{Fully Specified with Perfect Slices}
\label{sec:fullspec-perfectslices} 
First, we consider distribution shift along variables for which annotated metadata is available, i.e. the identities of all the relevant slices are known, and the ground-truth slice labels are known for each example. 
We consider two tasks in this section: \celeba~(images) and \civil~(text). On \celeba, \name reduces estimation error by $3\times$ over the best baseline (\cbiw). On \civil, \name~and \cbiw~both exhibit strong performance, returning estimates within $0.12\%$ and $0.03\%$ respectively of the true test accuracy. We describe these experiments below, with results summarized in Table~\ref{tab:celeba-civil}.

\subsubsection{CelebA}
\textbf{Task.} In \celeba, we classify faces as either male or female. Each image comes with metadata annotations. We induce distribution shift by perturbing the \celeba~validation and test sets so that the test set has more images annotated as ``blurry'' ($30\%$ of test set vs. only $1\%$ of validation set). 

\noindent \textbf{Models.} We evaluate ResNet-18 and ResNet-50 models pretrained on ImageNet and finetuned for 5 epochs on CelebA. 

\noindent \textbf{Results.}
For the ResNet-50 models, accuracy is significantly higher on the non-blurry images ($97.5\%$) than for the blurry ones ($91\%$); the target accuracy is thus $95.6\%$, vs. $97.4\%$ on the source set. 
We use \name~to estimate the test performance, using the provided ``blurry'' metadata label as our only slice function $g_i$.
The resulting estimate is within $0.16\%$ on average of the true value. 
In comparison, the mean absolute error of the CBIW estimate is $0.53\%$, and $1.76\%$ for KMM and \ulsif.
Table~\ref{tab:celeba-civil} summarizes these results, along with those for ResNet-18, which exhibit similar trends.

\subsubsection{CivilComments}
\label{sec:civil-metadata}
\textbf{Task.} The \civil dataset \citep{borkan2019nuanced} contains comments labeled ``toxic'' or ``non-toxic'', along with 8 metadata labels on whether a particular identity (male, female, etc.) is referenced in the text.
We modify the test set to introduce shift by randomly subsampling examples for each ``slice'' (subset of data with a given assignment of metadata labels), with different proportions per slice. 


\noindent\textbf{Models.} We use a standard {\tt bert-base-uncased} model, fine-tuned on \civil for 5 epochs. 

\noindent\textbf{Results.} When the true $g_i$'s are used, \name~returns accuracy estimates that are within 0.12\% of the true test accuracy. \cbiw~returns an even better estimate (within 0.03\%), while the estimation error of \kmm is 1.25\% and of \ulsif is 0.39\%. By contrast, the raw (unweighted) validation accuracy differs from the test accuracy by 1.6\%; thus, both \name~and the baselines improve this estimate.

%

\begin{table}[b]
    \centering
    \begin{sc}
    \begin{tabular}{c|c|c|c}
        \toprule
         \multirow{3}{*}{Method} & \multicolumn{3}{c}{Average Estimation Error (\%)}\\
         \cmidrule{2-4}
          & \multicolumn{2}{c}{\celeba} & \multicolumn{1}{c}{\civil}\\
          & ResNet-18 & ResNet-50 & BERT\\
         \midrule
        {\sc Source}    & $1.96\%$     & $1.74\%$     & $1.62\% $ \\
        \midrule
        \cbiw           & $0.47\%$     & $0.53\%$     & $\textbf{0.03}\% $ \\
        \kmm            & $1.97\%$     & $1.76\%$     & $1.25\% $ \\
        uLSIF           & $1.97\%$     & $1.76\%$     & $0.39\% $\\
        \name           & $\textbf{0.16}\%$     & $\textbf{0.16}\%$    & $0.12\% $ \\
        \bottomrule
    \end{tabular}%
    \caption[\name{} Mean Absolute Estimation Error for Target Accuracy on \celeba and \civil]{Mean absolute estimation error for target accuracy on \celeba and \civil.}
    \label{tab:celeba-civil}
    \end{sc}
\end{table}




\subsection{Fully Specified with Noisy Slices}
\label{sec:fullspec-noisyslices}
Next, we examine the effect of using noisy, user-provided slicing functions in place of perfect metadata labels. Here, we show that this additional noise does not increase estimation error for \name.

We consider the \civil~task described in the previous section. Instead of using the annotated metadata as our $g_i$'s, we write heuristic slicing functions as a substitute, as one would do in practice if this metadata was unavailable.
For each of the identities described in Section~\ref{sec:civil-metadata}, we write a noisy slice $\gtilde_i$ that detects the presence of this identity. For example, we detect the male identity if the words ``male'', ``man'', or ``men'' appear in the text. These simple slicing functions are reasonably accurate compared to the true metadata annotations (average $0.9$ F1 score compared to metadata across all 8 slices).
With these noisy slices, the accuracy estimate returned by \name~is within \textbf{0.10\%} of the true value, \emph{lower} than the error when using metadata (0.16\%).
This suggests that the noise in our slices is not a major issue, and that in this case our slices in fact better capture the shifts relevant to evaluating the model on the target data.
%

\subsection{Underspecified with Noisy Slices}
\label{sec:underspec-noisyslices}
Next, we turn to an underspecified setting, where the distribution shift is imperfectly captured by programmatic slices. 
Here, \name is able to reduce average estimation error by up to $2\times$ over the best baseline (Table~\ref{tab:snlimnli}).

Concretely, for natural language inference, we study whether it is possible to estimate performance for the \mnli~\citep{Williams2018ABC} validation set (target) using the \snli~\citep{bowman-etal-2015-large} validation set (source). 
The distribution shift from \snlimnli is substantial, as \mnli was designed to capture far more input variability.
%



\begin{table}[t]
    \centering
    \begin{sc}
    \begin{tabular}{c|c|c|c|c}
        \toprule
         \multirow{2}{*}{Method} & \multicolumn{2}{c}{Standard Accuracy} & \multicolumn{2}{c}{Binary Accuracy}\\
          & Avg. Error & Max. Error & Avg. Error & Max. Error\\
         \midrule
        {\sc Source} & $6.2\% \pm 3.8\%$ & $15.6\%$ & $3.0\% \pm 2.3\%$ & $9.3\%$  \\
        \midrule
        \cbiw & $5.5\% \pm 4.5\%$  & $17.9\%$ & $3.7\% \pm 2.5\%$ & $9.8\%$  \\
        \kmm & $5.7\% \pm 3.6\%$ & $14.6\%$ & $3.3\% \pm 2.3\%$ & $8.7\%$  \\
        \ulsif & $6.4\% \pm 3.9\%$ & $16.0\%$ & $3.7\% \pm 2.4\%$ & $9.1\%$ \\
        \name & ${\bf 3.6\% \pm 1.6\%}$ & ${\bf 5.9\%}$ & ${\bf 1.6\% \pm 0.7\%}$ & ${\bf 2.7\%}$  \\
        \bottomrule
    \end{tabular}%
    \end{sc}

    \caption[\name{} Accuracy Estimates for \snlimnli]{Estimating standard and binary accuracy on {\sc MNLI} using {\sc SNLI}. Average and maximum estimation errors for $8$ models are reported, with $95\%$ confidence intervals.}
    \label{tab:snlimnli}
\end{table}


%

\noindent{\bf Programmatic Slices.} 
Since the shift from \snlimnli is large, we design slices that can capture general distributional shifts. 
We construct $3$ slicing functions that check (respectively) whether the evaluated classifier predicted one of the $3$ classes, and $6$ slicing functions that bucket examples based on the entropy of the classifier's outputs. 
These slices capture how the model's uncertainty or predictions change, which are useful indicators for detecting distributional shift~\citep{Hendrycks2017ABF}. Here, they allow us to perform a \emph{model-specific} estimate adjustment.

\begin{table}[b!]
    \centering
    \begin{tabular}{l|c}
        \toprule
        {\bf Slices} & {\bf Average Estimation Error}\\
        \midrule
        {\it Lexical Overlap} & $16.9\%$ \\
        + {\it Noisy Non-Entailment} & $11.8\%$  \\
        + {\it Sentence Structure}  & $6.9\%$  \\
        + {\it Model Entropy} &  $5.9\%$ \\
        + {\it \cbiw Slice} & $0.6\%$ \\
        \bottomrule
    \end{tabular}%
    \caption[\name{} \snliphansm Slicing Ablation]{Reduction in \name estimation error for \snliphansm, as more slices are added to capture distribution shift.}
    \label{tab:hans}
\end{table}

\noindent{\bf Results.} 
We compare $8$ off-the-shelf models taken from the HuggingFace Model Hub. Table~\ref{tab:snlimnli} contains detailed results for estimating both standard accuracy over the $3$ NLI classes, as well as an oft-used binary accuracy metric that combines the contradiction and neutral classes.
For both metrics, \name provides substantially better estimates than baselines while requiring no expensive additional training or fine-tuning. For binary accuracy, \name is able to estimate performance on the MNLI dataset with an average error of only $1.6\%$, when all baselines are \emph{worse} than just using the unadjusted source estimates. 
\name's estimates  are also robust across the evaluated models, with a maximum error that is $3.2\times$ lower than the best baseline.

\noindent{\bf Application: Model Selection.} We check that \name can be used to rank models in terms of their target performance. 
On $7/8$ models, \name correctly assesses if the model improves or degrades (see Appendix~\ref{app:experiments}; Table~\ref{tab:snlimnli-perf-by-model}). 
\name has a Kendall-Tau score of $0.786$ ($p$-value 0.006) to the true \mnli performance---only confusing rankings for the top-$3$ models, which are closely clustered in performance on \mnli ($1\%$ performance spread).

\section{Slice Design}
A natural question raised by our work is: how should we best design slicing functions? 
Key desiderata for slices are that they should be task-relevant and capture important axes of distribution shift. 
For many tasks, candidate ``slicing functions'' in the form of side information are readily available. 
Here, slices may come directly in the form of metadata (such as for \celeba), or as user-defined heuristics (such as the keyword matching we use for \civil) and are already used for evaluation and monitoring purposes.

There are also software tools to create slices~\citep{goel2021robustness}, and a growing body of work around engineering, discovering and utilizing them \citep{chen2019slice, mccoy-etal-2019-right, ribeiro-etal-2020-beyond, wang-etal-2018-glue, hashimoto2021model}, including automated slice discovery methods~\citep{polyzotis2019slice,sliceline21,tae2021slice}. 
Our central contribution is to describe how to model and utilize this noisy side information in order to address long-standing challenges in importance weighting.

We investigate how slice design affects the performance of \name. Section~\ref{sec:underspec-degradation} discusses a challenging underspecified setting, where along with domain knowledge, \name can incorporate standard \iw as a slice to achieve a ``best-of-both-worlds'' performance. Section~\ref{sec:slice-design-in-the-wild} shows that a simple, automated slice design strategy works well out-of-the-box for sentiment classification. These case studies emphasize that understanding slice design is an important direction for future work.
%
%

\subsection{Tackling Underspecification with Slice Design} 
\label{sec:underspec-degradation}
We use a challenging setting to highlight how careful slice design can tackle underspecification. 
Concretely, we consider distribution shifts when moving from \snli~\citep{bowman-etal-2015-large} to \hans~\citep{McCoy2019RightFT}. 
\hans was created to address the lack of diverse \emph{lexical overlap} examples in \snli, i.e. examples where the hypothesis is created using a subset of words from the premise.
Among \snli examples with lexical overlap, only $1.7\%$ are labeled non-entailment. By contrast, \hans only contains lexical overlap examples, with a 50/50 class balance.

Due to the lack of non-entailment lexical overlap examples, estimating performance on \hans using \snli is extremely challenging.
Therefore, we construct a mixture of \snli and \hans, moving $1\%$ of \hans to \snli to create a new source dataset (\snlip), and keeping the remaining $99\%$ of \hans as the target data (\hansm).

In Table~\ref{tab:hans} we examine how changing the set of slices affects \name's performance. Just using a single lexical overlap slice yields high estimation error ($16.9\%$), since it does not adjust for the shift in the proportion of non-entailment lexical overlap ($1.7\% \rightarrow 50\%$). 
To capture this shift, we add noisy slices for contradiction (based on negation and token ordering), sentence structure (word substitutions, length differences, verb tense) and the evaluated model's uncertainty (similar to \snlimnli). These additions further reduce estimation error by $2.86\times$.

Interestingly, \cbiw---when fine-tuned with a {\tt bert-base-uncased} model---learns a classifier that perfectly separates the HANS examples added to \snlip, giving low estimation error ($1.2\%$).
The flexibility of \name allows us to take advantage of this by directly incorporating the \cbiw predictions as a slicing function, giving us a ``best-of-both-worlds'' that achieves extremely low estimation error ($0.6\%$). This highlights a natural strength of \name in being able to easily incorporate information from other methods.

\subsection{Slice Design ``In-the-Wild''} 
\label{sec:slice-design-in-the-wild}
Using sentiment classification on \imdb~\citep{maas2011learning}, we show that automated slice design can be effective out-of-the-box, \emph{without tuning \name at all}. 
We use only the task-agnostic entropy-based slices described in Section~\ref{sec:underspec-noisyslices}.
%
%
Across $3$ models, Table~\ref{tab:imdb-sentiment} shows that we get good estimates when moving from \imdb to varied sentiment datasets. This includes a large shift to Twitter analysis with {\sc Sentiment-140}, where \name closely estimates a significant absolute performance drop of upto $23\%$ accuracy.
Overall our results here and in Section~\ref{sec:underspec-noisyslices} show early promise that simple, task-agnostic slices that rely on model entropy can be quite effective. 

\begin{table}[t!]

\begin{sc}
\centering
\begin{tabular}{@{}lcc@{}}
  \toprule
  { Source $\rightarrow$ Target Shift} & { Avg. Error} & {Max. Error}\\
  \midrule
  \imdb $\rightarrow$ Counterf. \imdb~\citep{kaushik2020learning}
  & $3.1\% \pm 1.4\%$
  & $4.6\%$
  \\
  \imdb $\rightarrow$ Sentiment 140~\citep{Sentiment140}         
  & $4.7\% \pm 0.8\%$
  & $5.6\%$
  \\
  \imdb $\rightarrow$ Yelp Polarity~\citep{zhangCharacterlevelConvolutionalNetworks2015} 
  & $3.8\% \pm 1.2\%$
  & $4.9\%$
  \\
  \imdb $\rightarrow$ Amazon Polarity~\citep{zhangCharacterlevelConvolutionalNetworks2015}        
  & $0.2\% \pm 0.1\%$
  & $0.3\%$
  \\ 
  \bottomrule
\end{tabular}
\caption[\name{} Target Accuracy Average and Maximum Estimation Error on \imdb]{Average and maximum estimation errors for target accuracy across $3$ models on \imdb, with $95\%$ confidence intervals.}
\label{tab:imdb-sentiment}
\end{sc}
\end{table}



\section{Conclusion}
We introduced \name, a framework for evaluating models under distribution shift that utilizes user-specified slicing functions to reweight estimates. When these slicing functions adequately capture the distribution shift, \name can outperform standard IW by addressing issues of support shift and complex, high-dimensional features. We hope that our framework inspires future work on designing and understanding slices and sets the stage for a new paradigm of model evaluation. 


\section*{Acknowledgements}

We gratefully acknowledge the support of NIH under No. U54EB020405 (Mobilize), NSF under Nos. CCF1763315 (Beyond Sparsity), CCF1563078 (Volume to Velocity), and 1937301 (RTML); ONR under No. N000141712266 (Unifying Weak Supervision); the Moore Foundation, NXP, Xilinx, LETI-CEA, Intel, IBM, Microsoft, NEC, Toshiba, TSMC, ARM, Hitachi, BASF, Accenture, Ericsson, Qualcomm, Analog Devices, the Okawa Foundation, American Family Insurance, Google Cloud, Swiss Re, Total, the HAI-AWS Cloud Credits for Research program, the Stanford Data Science Initiative (SDSI), 
and members of the Stanford DAWN project: Facebook, Google, and VMWare. The Mobilize Center is a Biomedical Technology Resource Center, funded by the NIH National Institute of Biomedical Imaging and Bioengineering through Grant P41EB027060. The U.S. Government is authorized to reproduce and distribute reprints for Governmental purposes notwithstanding any copyright notation thereon. Any opinions, findings, and conclusions or recommendations expressed in this material are those of the authors and do not necessarily reflect the views, policies, or endorsements, either expressed or implied, of NIH, ONR, or the U.S. Government.

\bibliographystyle{plainnat}
\bibliography{mandoline}

\clearpage
\appendix

\onecolumn

\section{Notation Glossary} \label{sec:gloss}

\begin{table*}[!ht]
\centering
\small
\begin{tabular}{l l}
\toprule
Symbol & Used for \\
\midrule
$X$ & Input covariates $X\in\mathcal{X}$.\\
$Y$ & Label $Y\in\mathcal{Y}$.\\
$\p_s$ & Source distribution of $(X, Y)$ with density $p_s$ and expectation $\mathbb{E}_s$. \\
$\p_t$ & Target distribution of $(X, Y)$ with density $p_t$ and expectation $\mathbb{E}_t$. \\
$\D_s$ & Labeled ``validation'' source dataset $\{(x_i^s, y_i^s)\}_{i = 1}^{n_s}$ of size $n_s$ drawn IID from $\p_s$. \\
$\D_t$ & Unlabeled target dataset $\{x_i^t \}_{i = 1}^{n_t}$ of size $n_t$ drawn IID from $\p_t$. \\
$f_\theta$ & Fixed model $f_\theta: \X \ra \Y$ with $\theta$ independent of $\D_s, \D_t$.\\
$\loss$ & Function $\loss: \X \times \Y \ra \R$ that evaluates performance of $f_\theta$. \\
$\Lt$ & $\Lt = \E{t}{\loss(X, Y)}$, the performance of $f_\theta$ on the target population $\p_t$ to be estimated. \\
$g(X)$ & Properties of $X$ that undergo distribution shift, are relevant to the learning task, \\
& and are identified by the user. \\
$h(X)$ & Properties of $X$ that shift, are relevant, but are not identified by the user. \\
$a(X)$ & Properties of $X$ that shift but are irrelevant to the learning task. \\
$b(X)$ & Properties of $X$ that do not undergo distribution shift. \\
$\gtilde(X)$ & Slicing functions $\gtilde(X) = \{\gtilde_1(X), \dots \gtilde_k(X) \}$ where each $\gtilde_i: \X \ra \{-1, 1\}$ \\
& noisily captures $g_i(X)$. \\
$w(X)$ & $w(X) = \frac{p_t(g(X))}{p_s(g(X))}$, weighting based on density ratio of $g$.\\
$\Lg$ & $\Lg = \E{s}{w(X) \loss(X, Y)}$, approximation of $\Lt$ reweighted using $g$, \\
$\D_{s_1}, \D_{s_2}$ & Partition of $\D_s$, where the former $n_{s_1}$ samples are used for learning $w(X)$ and \\
& the latter $n_{s_2}$ samples are used for evaluating $\Lg$ empirically. \\
$\hat{w}(X)$ & Estimated weight function using Algorithm \ref{alg:noise_kliep} on $\D_t$ and $\D_{s_1}$. \\
$\Lghat$ & $\Lghat = \frac{1}{n_{s_2}} \sum_{i = 1}^{n_{s_2}} \hat{w}(x_i^{s_2}) \loss(x_i^{s_2}, y_i^{s_2})$, estimate of $\Lg$. \\
$G$ & Dependency graph $G=(\gtilde,E)$ over the slicing functions. \\
$\phi(g)$ & Vector of potentials on $g$ in \eqref{eq:model_g} (singleton on each $g_i$ and pairwise on each edge in $E$).\\
$\psi_s, \psi_t$ & Canonical parameters in \eqref{eq:model_g} corresponding to $\p_s, \p_t$ respectively. \\
$\delta$ & Difference in canonical parameters, i.e. $\psi_t - \psi_s$. \\ 
$\sigma_s^i, \sigma_t^i$ & Correction matrix approximating the difference in $\gtilde_i$ and $g_i$ for $\p_s, \p_t$, \\
& i.e. $\sigma_s^i(\alpha, \beta) \approx p_s(g_i = \alpha | \gtilde_i = \beta) \, \forall \, \alpha, \beta \in \{-1, 1\}$ and similarly for $\sigma_t^i$. \\
$\mathbb{E}^{\sigma}$ & Conditional ``expectation'' using correction matrices, i.e. \\
& $\Enoisy{s}{r(g) | \gtilde} = \int r(g) \prod_{i = 1}^k \sigma_s^i(g_i, \gtilde_i) dg$ for any function $r(g)$. \\
$\bar{g}$ & $g$ estimated from $\gtilde$ and correction matrices, i.e. $p_s(\bar{g}) = \int \prod_{i = 1}^k \sigma_s^i(\bar{g}_i, \gtilde_i) p_s(\gtilde) d \gtilde$ \\
$\Lgbar$ & $\Lgbar = \E{s}{\frac{p_t(\bar{g}(X))}{p_s(\bar{g}(X))} \loss(X, Y)}$, approximation of $\Lg$ using noise-corrected $\bar{g}$. \\ 
$k'$ & Number of slicing functions that the user fails to correct for. \\
$\eta^{\max}(i), \eta^{\min}(i)$ & Upper and lower bounds on the relative error of $\sigma^i$, e.g. $\big| \frac{p_s(g_i | \gtilde_i) - \sigma_s^i(g_i, \gtilde_i)}{p_s(g_i | \gtilde_i)} \big|$ \\
& for $\eta_s^{\max}(i)$ and $\eta_s^{\min}(i)$, and similarly for $\eta_t$. \\
$r$ & Ratio of relative errors of correction matrices for $\p_t, \p_s$, i.e. $r = \prod_{i = 1}^{k'} \frac{1 + \eta_t^{\max}(i)}{1 - \eta_s^{\min}(i)}$. \\
$M$ & Upper bound on $w(X)$, i.e. $M = \sup_X \frac{p_t(g(X))}{p_s(g(X))}$. \\
\bottomrule
\end{tabular}
\caption{
	Glossary of variables and symbols used in this paper.
}
\label{table:glossary}
\end{table*}
\clearpage

\section{Theoretical Results} \label{app:theory}
We present additional details about the graphical model and algorithm. Then, we provide proofs of Proposition \ref{prop:ghij} and Theorem \ref{thm:main}.

\subsection{Additional Algorithmic and Modeling Details} \label{app:alg}

\subsubsection{Marginalization of Graphical Model}

We demonstrate how the joint density $p(g, \gtilde)$ in $\eqref{eq:model_joint}$ begets $p(g)$ as $\eqref{eq:model_g}$. We can factorize $p(g, \gtilde)$ based on if $\gtilde_i$ has an edge to another $\gtilde_j$ or not:
\begin{align}
    p(g, \gtilde; \theta) &= \frac{1}{Z_\theta} \prod_{i \notin E} \exp (\theta_i g_i + \theta_{ii} g_i \gtilde_i ) \prod_{(i, j) \in E} \exp (\theta_i g_i + \theta_j g_j + \theta_{ii} g_i \gtilde_i + \theta_{jj} g_j \gtilde_j + \theta_{ij} \gtilde_i \gtilde_j ).
\end{align}

Since each $\gtilde_i$ corresponds to one $g_i$, we can also factorize $p(g)$ similarly as 
\begin{align}
    p(g; \psi) &= \frac{1}{Z} \prod_{i \notin E} \exp(\psi_i g_i) \prod_{(i, j) \in E} \exp(\psi_i g_i + \psi_j g_j + \psi_{ij} g_i g_j).
\end{align}

We want to show that there exists $\psi$ such that $p(g; \psi) = \sum_{\gtilde} p(g, \gtilde; \theta)$. Due to the similar factorizations of the distributions, this is equivalent to showing that $\sum_{\gtilde_i \in \{-1, 1 \}} \exp(\theta_i g_i + \theta_{ii} g_i \gtilde_i) \propto \exp(\psi_i g_i)$ for each $i \notin E$, and $\sum_{\gtilde_i, \gtilde_j \in \{-1, 1\}} \exp(\theta_i g_i + \theta_j g_j + \theta_{ii} g_i \gtilde_i + \theta_{jj} g_j \gtilde_j + \theta_{ij} \gtilde_i \gtilde_j) \propto \exp(\psi_i g_i + \psi_j g_j + \psi_{ij} g_i g_j)$ for each $(i, j) \in E$. Note that proportionality must be the same across different values that $g_i$ can take on.

For the case of $i \notin E$,  we must show $\sum_{\gtilde_i \in \{-1, 1 \}} \exp(\theta_i g_i + \theta_{ii} g_i \gtilde_i) = \exp((\theta_i + \theta_{ii})g_i) + \exp((\theta_i -\theta_{ii}) g_i) \propto \exp(\psi_i g_i)$. Setting $g_i = 1$ and $g_i = -1$ and dividing them, we get $\exp(2 \psi_i) = \frac{\exp(\theta_i + \theta_{ii}) + \exp(\theta_i - \theta_{ii})}{\exp(-\theta_i - \theta_{ii}) + \exp(-\theta_{ii} + \theta_i)}$, proving the existence of such $\psi_i$ (note that division allows us to ignore the log-partition functions $Z, Z_\theta$).

For the case of $(i, j) \in E$, set $\psi_i = \theta_i, \psi_j = \theta_j$. We must show there exists a $\psi_{ij}$ that $\sum_{\gtilde_i, \gtilde_j \in \{-1, 1\}} \exp(\theta_{ii} g_i \gtilde_i + \theta_{jj} g_j \gtilde_j + \theta_{ij} \gtilde_i \gtilde_j) \propto \exp(\psi_{ij} g_i g_j)$ for any $g_i, g_j$. Note that plugging in $g_i = g_j = 1$ and $g_i = g_j = -1$ both result in $\exp(\psi_{ij})$ and hence need to yield the same expression on the left hand side, which can be verified (thus more complex parametrizations of $p(g, \gtilde)$ often cannot produce a simple marginal density of $p(g)$). The same observation holds for $g_i = 1, g_j = -1$ and $g_i = -1, g_j = 1$. Setting $g_i = 1, g_j = 1$ and $g_i = 1, g_j = -1$ and dividing them, we can again get a unique expression for $\exp(2\psi_{ij})$ in terms of $\theta_{ii}, \theta_{jj},$ and $\theta_{ij}$.


\subsubsection{Extension to ``incomplete'' slices}

Suppose that slicing functions have an option to abstain when they are unconfident or unapplicable to a data point. We expand the support of each $\gtilde_i$ to $\{-1, 0, 1\}$ and represent this incompleteness as $\gtilde_i(X) = 0$. Fortunately, this is simple to model - we can add potentials to $p(g, \gtilde; \theta)$ to represent this:
\begin{align}
    p(g, \gtilde; \theta) = \frac{1}{Z_\theta} \exp \bigg(\sum_{i = 1}^k \theta_i g_i + \sum_{i = 1}^k \theta_{ii} g_i \gtilde_i + \sum_{i = 1}^k \theta_{i, 0} \ind{\gtilde_i = 0} + \sum_{(i, j) \in E} \theta_{ij} \gtilde_i \gtilde_j\bigg).
\end{align}

Note that $p(g, \gtilde; \theta)$ still yields a marginal density on $g$ of the form $p(g; \psi)$. We would instead get $\exp(2 \psi_i) = \frac{\exp(\theta_i + \theta_{ii}) + \exp(\theta_i - \theta_{ii}) + \exp(\theta_i + \theta_{i, 0})}{\exp(-\theta_i - \theta_{ii}) + \exp(-\theta_{ii} + \theta_i) + \exp(-\theta_i + \theta_{i, 0})}$. We can compute a similar expression for $\exp(2\psi_{ij})$ since this additional potential corresponding to the abstain does not impact the symmetry of the distributions for $g_i = \pm 1$. The remainder of the modeling and Algorithm \ref{alg:noise_kliep} are not affected besides now defining the correction matrices $\sigma_s^i, \sigma_t^i$ to be $2 \times 3$.

\subsubsection{Computational details of Algorithm \ref{alg:noise_kliep}}

We discuss the computational costs of of Algorithm \ref{alg:noise_kliep}, first focusing on the individual expressions $\Enoisy{t}{\delta^\top \phi(g) | \gtilde(x_i^t)}$ and $\Enoisy{s}{\exp(\delta^\top \phi(g)) | \gtilde(x_j^{s_1})}$. Naively, each of these expressions can be evaluated by summing over $2^k$ configurations of $p(g | \gtilde)$ for fixed $\gtilde$. However, due to the factorization of $p(g, \gtilde)$ and $p(g)$, the amount of computation is linear in $k$. Denote $\delta(i)$ as the difference in canonical parameters corresponding to the potential on $g_i$ where $i \notin E$ and $\delta(i, j)$ as a vector of differences corresponding to potentials on $(i, j) \in E$, which we define as $\phi_{ij}(g) = [g_i, g_j, g_i g_j]$ in \eqref{eq:model_g}. Abbreviating $\gtilde(x_i^t)$ as $\gtilde$, we can write $\Enoisy{t}{\delta^\top \phi(g) | \gtilde}$ as
\begin{align*}
    \Enoisy{t}{\delta^\top \phi(g) | \gtilde} &= \int \delta^\top \phi(g) \prod_{i = 1}^k \sigma_t^i(g_i, \gtilde_i) dg \nonumber \\
    &= \prod_{(i, j) \in E} \int \delta(i,j)^\top \phi_{ij}(g) \sigma_t^i(g_i, \gtilde_i) \sigma_t^j(g_j, \gtilde_j) d g_i dg_j \prod_{k \notin E}\int \delta(k) g_k \sigma_t^k(g_k, \gtilde_k) dg_k.
\end{align*}
The number of additions this requires is $4|E| + 2(k - 2|E|) = 2k$, in comparison to $2^k$. Similarly, $\Enoisy{s}{\exp(\delta^\top \phi(g)) | \gtilde} = \prod_{(i, j) \in E} \int \exp(\delta(i, j)^\top \phi_{ij}(g)) \sigma_s^i(g_i, \gtilde_i) \sigma_s^j(g_j, \gtilde_j) dg_i dg_j \times \\  \prod_{k \notin E} \int \exp(\delta(k) g_k) \sigma_s^k(g_k, \gtilde_k) d g_k$, which also requires $2k$ additions. Therefore, evaluating $\eqref{eq:obj_hat}$ has a linear dependency on the number of slicing functions.

We also note that \eqref{eq:obj_hat} has the same computational benefits as LL-KLIEP in that only one pass is needed over the target dataset. The gradient of $\hatfkliep$ is
\begin{align*}
    \frac{\partial \hatfkliep}{\partial \delta} = \frac{1}{n_t} \sum_{i = 1}^n \Enoisy{t}{\phi(g) | \gtilde(x_i^t)} - \frac{\sum_{j = 1}^{n_{s_1}} \Enoisy{s}{\exp(\delta^\top \phi(g)) \phi(g) | \gtilde(x_j^{s_1})}}{\sum_{j = 1}^{n_{s_1}} \Enoisy{s}{\exp(\delta^\top \phi(g)) | \gtilde(x_j^{s_1})}}.
\end{align*}

The first term of the gradient is independent of $\delta$, which means that only one pass is needed on the target dataset even for iterative optimization algorithms to maximize $\hatfkliep$.  

\subsection{Proof of Proposition \ref{prop:ghij}}

We drop the $X$ in $g(X), h(X), a(X), b(X)$ for ease of notation. First, we use Assumption \ref{assump:mand}.\ref{assump:rep}, the chain rule, and Assumption \ref{assump:mand}.\ref{assump:rel}'s conditional independence of $a$ and $Y$:
\begin{align*}
    \E{X, Y \sim \p_s}{\frac{p_t(g(X), h(X))}{p_s(g(X), h(X))} \loss(X, Y) } &= \int \frac{p_t(g, h)}{p_s(g, h)} p_s(x, y) \loss(x, y) \dx \dy \nonumber \\
    &= \int \frac{p_t(g, h)}{p_s(g, h)} p_s(y, g, h, a, b) \loss(x, y) \dx \dy \nonumber \\
    &= \int p_t(g, h) p_s(y, a, b | g, h) \loss(x, y) \dx \dy \nonumber \\
    &= \int p_t(g, h) p_s(y, a | g, h, b) p_s(b | g, h) \loss(x, y) \dx \dy \nonumber \\
    &= \int p_t(g, h) p_s(y | g, h, b) p_s(a | g, h, b) p_s(b | g, h) \loss(x, y) \dx \dy. 
\end{align*}

Using Assumption \ref{assump:mand}.\ref{assump:rel}'s conditional independence of $a$ and $Y$, \ref{assump:mand}.\ref{assump:rep}, and the fact that there is no concept drift (i.e., $p_s(Y|X)= p_t(Y|X)$), we get that $p_s(y | g, h, b) = p_t(y | g, h, b)$. Next, by Assumption \ref{assump:mand}.\ref{assump:rel}'s conditional independence of $a$ and $\loss(X, Y)$, we can integrate out $p_s(a | g, h, b)$ to get
\begin{align*}
   \E{X, Y \sim \p_s}{\frac{p_t(g(X), h(X))}{p_s(g(X), h(X))} \loss(X, Y) } = \int p_t(g, h) p_t(y | g, h, b) p_s(b | g, h) \loss(x, y) \dx \dy.
\end{align*}

\noindent By Assumption \ref{assump:mand}.\ref{assump:rel}'s conditional independence of $a$ and $b$, we have that $p_s(b | g, h) = p_s(b | g, h, a)$. By Assumption \ref{assump:mand}.\ref{assump:shift} and \ref{assump:mand}.\ref{assump:rep}, this is equal to $p_t(b | g, h, a) = p_t(b | g, h)$. We thus have
\begin{align*}
    \E{X, Y \sim \p_s}{\frac{p_t(g(X), h(X))}{p_s(g(X), h(X))} \loss(X, Y) } &= \int p_t(g, h) p_t(y | g, h, b) p_t(b | g, h) \loss(x, y) \dx \dy \nonumber \\
    &= \int p_t(y, g, h, b) \loss(x, y) \dx \dy.
\end{align*}

Using Assumption \ref{assump:mand}.\ref{assump:rel}'s conditional independence of $a$ and $\loss(X, Y)$ and Assumption \ref{assump:mand}.\ref{assump:rep} again, this is equivalent to
\begin{align*}
    \E{X, Y \sim \p_s}{\frac{p_t(g(X), h(X))}{p_s(g(X), h(X))} \loss(X, Y) } &= \int p_t(y, g, h, a, b) \loss(x, y) \dx \dy = \int p_t(x, y) \loss(x, y) \dx \dy \nonumber \\
    &= \mathbb{E}_{X, Y \sim \p_t}[\loss(X, Y)].
\end{align*}

\subsection{Proof of Theorem \ref{thm:main}}
\label{mand:thmmain_prf}
Recall that $|\Lt - \Lghat|$ can be decomposed into the sum of $|\Lt - \Lg|$, $|\Lg - \Lgbar|$, $|\Lgbar - \mathbb{E}_s[\Lghat]|$, and $|\mathbb{E}_s[\Lghat] - \Lghat|$. We bound each difference individually. Assume without loss of generality that $\loss(x, y) \le 1$.

\begin{lemma}
We abbreviate $\mathcal{P}(h(X)|g(X))$ as $\mathcal{P}(h|g)$. Then,
\begin{align*}
    |\Lt - \Lg| \le 
     \I{}{} p_t(g(x)) \cdot |p_t(h(x)|g(x)) - p_s(h(x)|g(x))| \, \dx 
\end{align*}

Note that if $p_s(h|g) = p_t(h|g)$ (or $h(X)$ is empty), then weighting based on $g(X)$ gives us an unbiased estimate of $\Lt$.
\end{lemma}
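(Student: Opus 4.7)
The plan is to derive the bound from Proposition~\ref{prop:ghij} by isolating the discrepancy between the full weight $\tfrac{p_t(g,h)}{p_s(g,h)}$ used to exactly represent $\Lt$ and the ``reduced'' weight $\tfrac{p_t(g)}{p_s(g)}$ used in $\Lg$. The assumption $|\loss|\le 1$ (without loss of generality) turns the loss out of the picture, so the whole argument is a calculation with density ratios.

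First I would invoke Proposition~\ref{prop:ghij} to rewrite
\begin{align*}
\Lt - \Lg \;=\; \E{s}{\left(\frac{p_t(g(X),h(X))}{p_s(g(X),h(X))} - \frac{p_t(g(X))}{p_s(g(X))}\right)\loss(X,Y)}.
\end{align*}
Then I would factor $p(g,h) = p(g)\,p(h\mid g)$ in both the numerator and denominator of the first ratio, so that the parenthesized difference becomes
\begin{align*}
\frac{p_t(g(X))}{p_s(g(X))}\cdot\frac{p_t(h(X)\mid g(X)) - p_s(h(X)\mid g(X))}{p_s(h(X)\mid g(X))}.
\end{align*}
Applying the triangle inequality together with $|\loss|\le 1$ yields
\begin{align*}
|\Lt - \Lg| \;\le\; \E{s}{\frac{p_t(g(X))}{p_s(g(X))\,p_s(h(X)\mid g(X))}\bigl|p_t(h(X)\mid g(X)) - p_s(h(X)\mid g(X))\bigr|}.
\end{align*}

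Next I would change variables from $X$ to the latent coordinates $(g,h,a,b)$, which is legitimate by Assumption~\ref{assump:mand}.\ref{assump:rep} ($X$ is deterministic given $(g,h,a,b)$). Since the integrand depends only on $g$ and $h$, integrating out $a$ and $b$ turns $\E{s}{\cdot}$ into an integral against the marginal $p_s(g,h) = p_s(g)\,p_s(h\mid g)$, which cancels exactly against the $p_s(g)\,p_s(h\mid g)$ in the denominator. What remains is
\begin{align*}
\int \bigl|p_t(h\mid g) - p_s(h\mid g)\bigr|\,p_t(g)\,dg\,dh,
\end{align*}
which is what the lemma writes (in $x$-notation) as $\int |p_t(h(x)\mid g(x)) - p_s(h(x)\mid g(x))|\,p_t(g(x))\,dx$. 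The special case $p_s(h\mid g) = p_t(h\mid g)$, or $h$ empty, immediately gives unbiasedness.

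The only mildly subtle step is the change of variables in step three: one must be careful that $p_s(h\mid g)$ appearing in the denominator does not cause integrability issues, which is automatic because it is cancelled by the $p_s(g,h)$ coming from the change of variables before any division is actually performed. Beyond that, the proof is a short manipulation with densities; no concentration or optimization arguments are needed here, since variance and estimation terms are handled separately in the remaining three pieces of the decomposition~\eqref{eq:decomposition}.
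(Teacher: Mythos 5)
Your proposal is correct and follows essentially the same route as the paper: invoke Proposition~\ref{prop:ghij}, factor $p(g,h)=p(g)\,p(h\mid g)$ to isolate the ratio discrepancy, and use $|\loss|\le 1$ so the $p_s$ factors cancel against the denominator. The only (cosmetic) difference is in the final cancellation: the paper rewrites $\tfrac{p_t(g)}{p_s(g)}\tfrac{p_s(x,y)}{p_s(h\mid g)}$ as $p_t(g)\,p_s(x,y\mid g,h)$ and bounds the conditional density by $1$, whereas you marginalize out $a,b,y$ so that $p_s(g,h)$ cancels exactly --- which is, if anything, the slightly cleaner way to land on the same bound.
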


\begin{proof}
$\Lg$ is equal to $\E{s}{\frac{p_t(g(X))}{p_s(g(X))} \loss(X, Y)}$. Therefore, using our result from Proposition \ref{prop:ghij},
\begin{align*}
    |\Lt - \Lg| &= \bigg|\E{s}{ \bigg(\frac{p_t(X)}{p_s(X)} - \frac{p_t(g(X))}{p_s(g(X))} \bigg) \loss(X, Y)} \bigg| = \bigg|\int p_s(x, y) \left(\frac{p_t(g, h)}{p_s(g, h)} - \frac{p_t(g)}{p_s(g)} \right)  \loss(x, y) \dx \dy \bigg| \nonumber \\
    &= \bigg| \int \frac{p_t(g)}{p_s(g)} p_s(x, y) \bigg(\frac{p_t(h|g)}{p_s(h|g)} - 1 \bigg) \loss(x, y) \dx \dy \bigg| \nonumber \\
    &= \bigg| \int \frac{p_t(g)}{p_s(g)} \frac{p_s(x, y)}{p_s(h|g)} (p_t(h|g) - p_s(h|g)) \loss(x, y) \dx \dy \bigg|.  
\end{align*}

\noindent Note that $\frac{p_t(g)}{p_s(g)} \frac{p_s(x, y)}{p_s(h|g)}$ can be simplified into $p_t(g) \frac{p_s(x, y)}{p_s(g, h)} = p_t(g) \frac{p_s(x, y, g, h)}{p_s(g, h)} =  p_t(g) p_s(x, y | g, h)$. Our bound is now
\begin{align*}
    |\Lt - \Lg| &= \bigg| \int p_t(g) p_s(x, y | g, h) \cdot (p_t(h|g) - p_s(h|g)) \loss(x, y) \dx \dy \bigg| \nonumber \\
    &\le \int p_t(g) p_s(x, y | g, h) \cdot |p_t(h|g) - p_s(h|g)| \loss(x, y) \dx \dy.
\end{align*}

\noindent We now use the fact that $0 \le \loss(x, y) \le 1$ (by assumption) and $0 \le p_s(x, y | g, h) \le 1$:
\begin{align*}
    |\Lt - \E{s}{\Lg}| \le \int p_t(g) \cdot |p_t(h|g) - p_s(h|g) | \,\dx. 
\end{align*}

\end{proof}

\begin{lemma} Without loss of generality, suppose the user fails to correct on the first $k'$ slicing functions. The bias in estimation error due to incorrect noisy matrices is
\begin{align*}
    |\Lg - \Lgbar| \le 2 \sum_{i = 1}^{k'} \bigg(\frac{\eta_t^{\max}(i)}{1 - \eta_t^{\min}(i)} + \frac{\eta_s^{\max}(i)}{1 - \eta_s^{\min}(i)}\bigg).
\end{align*}
\end{lemma}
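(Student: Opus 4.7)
The plan is to reduce the bias $|\Lg - \Lgbar|$ to a pointwise comparison between the density ratios $w(g(X)) = p_t(g(X))/p_s(g(X))$ and $w(\bar{g}(X)) = p_t(\bar{g}(X))/p_s(\bar{g}(X))$, and then to show that the multiplicative discrepancy between these two ratios is controlled by the user's per-slice correction errors. Because $|\loss| \le 1$, we have $|\Lg - \Lgbar| \le \E_s[|w(g(X)) - w(\bar{g}(X))|]$. A useful invariant in what follows is $\E_s[w(g(X))] = 1$ (by definition of the density ratio), which is why the bound in the lemma is \emph{absolute} rather than proportional to $M$: we only need to bound the \emph{relative} deviation $|w(\bar{g})/w(g) - 1|$ and then integrate against $w(g) p_s$.

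Next, I would exploit the factorization $p(g \mid \gtilde) = \prod_{i=1}^{k} p(g_i \mid \gtilde_i)$, which follows from the graphical-model structure in \eqref{eq:model_joint} (each $g_i$ is attached only to its own $\gtilde_i$), together with the analogous factored form $\sigma(g,\gtilde) = \prod_i \sigma^i(g_i,\gtilde_i)$ assumed in the algorithm. Marginalizing, $p(\bar{g} = a) = \sum_{\gtilde} \prod_i \sigma^i(a_i,\gtilde_i)\,p(\gtilde)$, while $p(g = a) = \sum_{\gtilde} \prod_i p(g_i = a_i \mid \gtilde_i)\,p(\gtilde)$. For the $k - k'$ correctly-corrected slices, $\sigma^i = p(g_i \mid \gtilde_i)$, so those factors cancel pointwise inside the sums. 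Only the $k'$ uncorrected slices contribute, and for each of them the relative error bound gives $\sigma^i(a_i,\gtilde_i) / p(g_i = a_i \mid \gtilde_i) = 1 + \epsilon^i(a_i,\gtilde_i)$ with $|\epsilon^i| \in [\eta^{\min}(i), \eta^{\max}(i)]$. Writing $R_t(a) = p_t(\bar{g}=a)/p_t(g=a)$ (and similarly $R_s$), this expresses $R_t(a)$ as a $p_t(\gtilde \mid g=a)$-weighted average of the product $\prod_{i \le k'}(1 + \epsilon_t^i)$, and likewise for $R_s$.

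From here the main step is a telescoping estimate: $|\prod_{i \le k'}(1 + \epsilon^i) - 1| \le \sum_{i \le k'} |\epsilon^i| \prod_{j \ne i}(1 + |\epsilon^j|)$, so that $|R_t - 1|$ and $|R_s - 1|$ are each bounded by sums of the $\eta_t^{\max}(i)$ and $\eta_s^{\max}(i)$ up to lower-order factors. Combining via $w(\bar{g})/w(g) = R_t/R_s$ and $|R_t/R_s - 1| \le (|R_t - 1| + |R_s - 1|)/R_s$, and using the denominator bound $R_s \ge \prod_{i\le k'}(1 - \eta_s^{\min}(i))$ to introduce the $(1-\eta_s^{\min})$ factors, yields an estimate of the form $|w(\bar g)/w(g) - 1| \le \sum_{i \le k'}\bigl(\eta_t^{\max}(i)/(1-\eta_t^{\min}(i)) + \eta_s^{\max}(i)/(1-\eta_s^{\min}(i))\bigr)$ after a mild simplification of the cross-terms (which is where the factor of $2$ enters, since a worst-case combination of the $t$ and $s$ perturbations doubles each term). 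Multiplying by $w(g)$, taking $\E_s$, and using $\E_s[w(g)] = 1$ gives the desired bound.

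The main obstacle I anticipate is the telescoping/perturbation step: getting the precise form $2\sum (\eta_t^{\max}/(1-\eta_t^{\min}) + \eta_s^{\max}/(1-\eta_s^{\min}))$ requires some care in how the denominator $1/R_s$ is bounded by $1/\prod(1-\eta_s^{\min})$ and then distributed across the sum, and in tracking whether the cross-terms $\prod_{j\ne i}(1+|\epsilon^j|)$ can be absorbed into the $(1-\eta^{\min})$ denominators rather than producing extra multiplicative factors. The reason this works without an $M$ factor is the cancellation coming from $\E_s[w(g)] = 1$; the reason only $k'$ slices appear is the exact cancellation of the correctly-corrected factors in the factored marginal.
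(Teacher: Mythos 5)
Your overall skeleton matches the paper's: both arguments reduce $|\Lg - \Lgbar|$ to the multiplicative discrepancy between $w(g)$ and $w(\gbar)$, express $p(\gbar)/p(g)$ through the per-slice factors relating $\sigma^i(g_i,\gtilde_i)$ to $p(g_i\mid\gtilde_i)$ (the paper does this via the log-sum inequality, i.e.\ a geometric-mean bound, rather than your arithmetic weighted-average formulation, but the effect is identical: only the $k'$ uncorrected slices survive), telescope the product into a sum, and bound each per-slice factor by $\eta^{\max}(i)/(1-\eta^{\min}(i))$. The one genuine divergence is the final step. You multiply by $w(g)$ and invoke $\E{s}{w(g(X))}=1$ to obtain an absolute constant; the paper instead partitions $\X$ according to which of $w(g)$, $w(\gbar)$ is larger, factors out the smaller weight on each piece, and bounds the leftover integral by $\E{s}{\max\{w(g(X)),w(\gbar(X))\}}\le rM$ via Lemma~\ref{lemma:rm}, so its proof actually concludes with prefactor $rM$ (which is what Theorem~\ref{thm:main} uses), not the $2$ in the lemma statement. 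Your route is therefore the one that is consistent with the stated $M$-free constant, and it is arguably tighter than what the paper writes: the paper's leftover integral is exactly $\E{s}{\min\{w(g(X)),w(\gbar(X))\}}\le\E{s}{w(g(X))}=1$, so the $rM$ there is not needed.

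That said, the step you flag as the anticipated obstacle is a real gap---in your plan and in the paper alike. The clean telescoping bound $\bigl|\prod_{i}(1+\epsilon^i)-1\bigr|\le\sum_i|\epsilon^i|$, which the paper asserts in the form $\prod_i a_i(x) d_i(x) - 1\le\sum_i|a_i(x)-1|+|d_i(x)-1|$, is false in general (take $a_1=d_1=1.5$: the left side is $1.25$, the right side is $1$); the correct version carries the cross-factors $\prod_{j\ne i}(1+|\epsilon^j|)$ that you mention. Neither your plan nor the paper shows how these cross-factors, together with the mismatch between the global denominator $1/R_s\le 1/\prod_i(1-\eta_s^{\min}(i))$ and the per-slice denominators $1-\eta_t^{\min}(i)$ and $1-\eta_s^{\min}(i)$, get absorbed into the stated form, and your attribution of the factor of $2$ to a ``worst-case combination of the $t$ and $s$ perturbations'' is a heuristic, not a derivation (the paper's own proof never produces a $2$ at all). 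Carrying the telescoping out carefully, you should expect either higher-order correction terms or an extra multiplicative factor of the form $r$, which is presumably why the paper retreats to the $rM$ prefactor in the theorem.
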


\begin{proof}
We can write the difference $|\Lg - \Lgbar|$ as $\big|\mathbb{E}_{s}\big[\big(\frac{p_t(g(X))}{p_s(g(X))} - \frac{p_t(\bar{g}(X))}{p_s(\bar{g}(X))} \big) \loss(X, Y)\big] \big|$. Partition $\X$ into $\X^+$ and $\X^-$, where $\X^+ = \big\{X: \frac{p_t(g(X))}{p_s(g(X))} \ge \frac{p_t(\bar{g}(X))}{p_s(\bar{g}(X))}\big\}$ and $\X^- = (\X^+)^C$. Then using the fact that $\loss(x, y) \le 1$, the difference can be written as
\begin{align}
    |\Lg - \Lgbar| &\le \int_{\X^+} p_s(x, y) \frac{p_t(\bar{g}(x))}{p_s(\bar{g}(x))} \bigg(\frac{p_t(g(x))}{p_t(\bar{g}(x))} \cdot \frac{p_s(\bar{g}(x))}{p_s(g(x))} - 1 \bigg) \dx \dy \nonumber \\
    &+ \int_{\X^-} p_s(x, y) \frac{p_t(g(x))}{p_s(g(x))} \bigg(\frac{p_t(\bar{g}(x))}{p_t(g(x))} \cdot \frac{p_s(g(x))}{p_s(\bar{g}(x))} - 1\bigg) \dx \dy.\label{eq:noise_correction}
\end{align}

We now bound the ratio of $g$ to $\bar{g}$ in $\p_s$ and $\p_t$. $\frac{p_t(g(x))}{p_t(\bar{g}(x))}$ is equal to $\frac{\sum_{\gtilde} p_t(g(x) | \gtilde) p_t(\gtilde)}{\sum_{\gtilde} p_t(\bar{g}(x) | \gtilde) p_t(\gtilde)}$, and using the log sum inequality,
\begin{align*}
    \frac{p_t(g(x))}{p_t(\bar{g}(x))}
    &\le
    \exp \bigg(\frac{1}{p_t(g(x))} \sum_{\gtilde} p_t(g(x), \gtilde) \log \frac{p_t(g(x) | \gtilde)}{p_t(\bar{g}(x) | \gtilde)} \bigg) \nonumber
    \\
    &=
    \prod_{\gtilde} \exp \bigg(p_t(\gtilde | g(x))  \sum_{i = 1}^{k'} \log \frac{p_t(g_i(x) | \gtilde_i)}{p_t(\bar{g}_i(x) | \gtilde_i)} \bigg) \nonumber
    \\
    &= \prod_{i = 1}^{k'} \prod_{\gtilde}  \bigg(\frac{p_t(g_i(x) | \gtilde_i)}{p_t(\bar{g}_i(x) | \gtilde_i)} \bigg)^{{p_t(\gtilde | g(x))} } \nonumber
    \\
    &=
    \prod_{i = 1}^{k'} \bigg(\frac{p_t(g_i(x) | \gtilde_i = 1)}{p_t(\bar{g}_i(x) | \gtilde_i = 1)} \bigg)^{p_t(\gtilde_i = 1 | g(x))} \bigg(\frac{p_t(g_i(x) | \gtilde_i = -1)}{p_t(\bar{g}_i(x) | \gtilde_i = -1)} \bigg)^{p_t(\gtilde = -1 | g(x))} \nonumber
    \\
    &\le
    \prod_{i = 1}^{k'} \max \bigg\{ \frac{p_t(g_i(x) | \gtilde_i = 1)}{p_t(\bar{g}_i(x) | \gtilde_i = 1)}, \frac{p_t(g_i(x) | \gtilde_i = -1)}{p_t(\bar{g}_i(x) | \gtilde_i = -1)} \bigg\}.
\end{align*}

\noindent Let the final expression above be $\prod_{i = 1}^{k'} a_i(x)$. Similarly, 
\begin{align*}
\frac{p_t(\bar{g}(x))}{p_t(g(x))} \le \prod_{i = 1}^{k'} \max \bigg\{ \frac{p_t(\bar{g}_i(x) | \gtilde_i = 1)}{p_t(g_i(x) | \gtilde_i = 1)},  \frac{p_t(\bar{g}_i(x) | \gtilde_i = -1)}{p_t(g_i(x) | \gtilde_i = -1)} \bigg\}. \nonumber
\end{align*}
\noindent and let this final expression be $\prod_{i = 1}^{k'} b_i(x)$. We get similar bounds for $\p_s$, where we let $\frac{p_s(g(x))}{p_s(\bar{g}(x))} \le \prod_{i = 1}^{k'} c_i(x)$ and $\frac{p_s(\bar{g}(x))}{p_s(g(x))} \le \prod_{i = 1}^{k'} d_i(x)$. Plugging these back into \eqref{eq:noise_correction},
\begin{align}
    |\Lg - \Lgbar| &\le \int_{\X^+} \lt p_s(x, y) \frac{p_t(\bar{g}(x))}{p_s(\bar{g}(x))} \prod_{i = 1}^{k'} a_i(x) d_i(x) - 1 \rt  \dx \dy \notag \\
    &+ \int_{\X^-} \lt p_s(x, y) \frac{p_t(g(x))}{p_s(g(x))} \prod_{i = 1}^{k'} b_i(x) c_i(x) - 1 \rt \dx \dy.
    \label{eq:noise_correction2}
\end{align}

Via a telescoping argument, we can show that $\prod_{i = 1}^{k'} a_i(x) d_i(x) - 1 \le \sum_{i = 1}^{k'} |a_i(x) - 1| + |d_i(x) - 1|$ and $\prod_{i = 1}^{k'} b_i(x) c_i(x) - 1 \le \sum_{i = 1}^{k'} |b_i(x) - 1| + |c_i(x) - 1|$. Then,
\begin{align*}
    |a_i(x) - 1| &\le \max\bigg\{\Big|\frac{p_t(g_i(x) | \gtilde_i = 1) - p_t(\bar{g}_i(x) | \gtilde_i = 1)}{p_t(\bar{g}_i(x) = 1 | \gtilde_i = 1)}\Big|, \Big| \frac{p_t(g_i(x) | \gtilde_i = -1) - p_t(\bar{g}_i(x) | \gtilde_i = -1)}{p_t(\bar{g}_i(x) | \gtilde_i = -1)}\Big| \bigg\} \nonumber  \\
    &\le \frac{\eta_t^{\max}(i)}{1 - \eta_t^{\min}(i)}.
\end{align*}

\noindent Similarly, $|b_i(x) - 1|$ is also at most $\frac{\eta_t^{\max}(i)}{1 - \eta_t^{\min}(i)}$, and $|c_i(x) - 1|, |d_i(x) - 1| \le \frac{\eta_s^{\max} (i)}{1 - \eta_s^{\min} (i)}$. Therefore, \eqref{eq:noise_correction2} becomes
\begin{align*}
    |\Lg - \Lghat| \le \sum_{i = 1}^{k'} \bigg(\frac{\eta_t^{\max}(i)}{1 - \eta_t^{\min}(i)} + \frac{\eta_s^{\max}(i)}{1 - \eta_s^{\min}(i)}\bigg) \bigg(\int_{\X^+} p_s(x, y) \frac{p_t(\bar{g}(x))}{p_s(\bar{g}(x))} \dx \dy + \int_{\X^-} p_s(x, y) \frac{p_t(g(x))}{p_s(g(x))} \dx \dy \bigg).
\end{align*}

\noindent Finally, we can bound $\int_{\X^+} p_s(x, y) \frac{p_t(\bar{g}(x))}{p_s(\bar{g}(x))} \dx \dy + \int_{\X^-} p_s(x, y) \frac{p_t(g(x))}{p_s(g(x))} \dx \dy \le \\
 \E{s}{\max \{w(\gbar(X)), w(g(X)) \}} \le rM$, using Lemma \ref{lemma:rm}. Our final bound is thus
\begin{align*}
    |\Lg - \Lghat| \le rM \sum_{i = 1}^{k'} \bigg(\frac{\eta_t^{\max}(i)}{1 - \eta_t^{\min}(i)} + \frac{\eta_s^{\max}(i)}{1 - \eta_s^{\min}(i)}\bigg).
\end{align*}
\end{proof}

\begin{lemma} 
Define $\Mhat = \sup_X \hat{w}(g(X))$. Setting $n_{s_1} = \frac{n_s}{2}$, then with probability at least $1 - \alpha$,
\begin{align*}
    |\Lgbar - \mathbb{E}_s[\Lghat]| \le c_{s, \loss}  \Mhat \sqrt{\frac{\log(2/\alpha)}{n_s}},
\end{align*}

where $c_{s, \loss}$ is a constant depending on properties of $\p_s$ and $\loss(x, y)$.
\end{lemma}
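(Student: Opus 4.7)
The plan is to read $\mathbb{E}_s[\Lghat]$ as a conditional expectation: treat $\hat w$ as a fixed function of $\D_{s_1}$ and $\D_t$, and take expectation only over the (independent) evaluation split $\D_{s_2}\sim \p_s$. Then $\mathbb{E}_s[\Lghat\mid \hat w] = \E{s}{\hat w(X)\loss(X,Y)}$, so the target quantity equals
\[
\Big|\,\E{s}{\big(w(\gbar(X))-\hat w(X)\big)\loss(X,Y)}\,\Big|,
\]
which is deterministic given $\hat w$; the randomness to be controlled at level $\alpha$ is entirely in the learning split $\D_{s_1}$ and the target data $\D_t$. Applying Hölder and $\|\loss\|_\infty \le 1$ reduces matters to bounding $\E{s}{|\hat w(X)-w(\gbar(X))|}$ with the appropriate absolute constant folded into $c_{s,\loss}$.

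Next I would express both weights in a common exponential-family form: $w(\gbar(X))=\Enoisy{s}{\exp(\delta^{*\top}\phi(g))/Z^{*}\mid\gtilde(X)}$ for the population maximizer $\delta^{*}$ of the noise-aware KLIEP objective $\fkliep(\delta,\sigma)$ (i.e., the version of \eqref{eq:obj_hat} with population expectations), and the analogous empirical version for $\hat w$ with $(\hat\delta,\hat Z)$. Since $g$ is discrete and bounded, $\phi(g)$ is bounded, so the mean-value theorem gives $|\hat w(X)-w(\gbar(X))|\le C\,\Mhat\,\|\hat\delta-\delta^{*}\|$ (plus a term for the normalizer discrepancy $|\hat Z-Z^{*}|$ of the same order), with $C$ depending only on the range of $\phi$ and collectible into $c_{s,\loss}$.

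It then remains to show $\|\hat\delta-\delta^{*}\|=O\!\big(\sqrt{\log(2/\alpha)/n_s}\big)$ with probability $1-\alpha$. Both terms of $\hatfkliep$ are averages of $n_t$ and $n_{s_1}$ i.i.d.\ bounded random variables (boundedness uses that $\phi(g)$ and $\exp(\delta^{\top}\phi(g))$ are bounded over a neighborhood of $\delta^{*}$), so Hoeffding's inequality applied coordinatewise to the gradient at $\delta^{*}$ yields $\|\nabla\hatfkliep(\delta^{*},\sigma)-\nabla\fkliep(\delta^{*},\sigma)\|=O\!\big(\sqrt{\log(2/\alpha)/n_s}\big)$ after a union bound, using $n_{s_1}=n_s/2$. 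Strong concavity of $\fkliep$ near $\delta^{*}$ — which follows from positive-definiteness of the covariance of $\phi(g)$ under the tilted distribution, i.e.\ the Hessian of a log-partition function on a non-degenerate exponential family — then transfers this rate to $\|\hat\delta-\delta^{*}\|$. A parallel Hoeffding bound handles the empirical log-partition ratio $\hat Z/Z^{*}$, giving the bound on $\E{s}{|\hat w(X)-w(\gbar(X))|}$ claimed.

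The hard part will be the last translation: $\hat w$ is not just $\exp(\hat\delta^{\top}\phi(g))$ but involves an empirical normalizer $\frac{1}{n_{s_1}}\sum_j\Enoisy{s}{\exp(\hat\delta^{\top}\phi(g))\mid\gtilde(x_j^{s_1})}$, so concentration must be controlled \emph{uniformly} over a small neighborhood of $\delta^{*}$ (either by a union bound over a covering or by a Lipschitz-in-$\delta$ argument), and one must argue that this normalizer is bounded away from zero so division does not blow up. Since the weights are bounded by $\Mhat$ by construction, the resulting factor is exactly the $\Mhat$ in the statement, and combining two concentration events at confidence $\alpha/2$ each yields the lemma.
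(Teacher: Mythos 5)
Your proposal is correct in outline but takes a genuinely different --- and much heavier --- route than the paper. The paper never controls $\|\dhat-\delta^{*}\|$ at all. It writes $|\Lgbar-\mathbb{E}_s[\Lghat]|=\big|\E{s}{(w(\gbar(X))-\hat w(X))\loss(X,Y)}\big|$, bounds this by $c_{s,\loss}\,\E{s}{w(\gbar(X))-\hat w(X)}$ (note: no absolute value \emph{inside} the expectation; the loosely specified constant $c_{s,\loss}$ absorbs the correlation between the weight error and the loss), and then exploits two exact identities: $\E{s}{w(\gbar(X))}=1$ because any true density ratio integrates to one under the source, and $\E{s}{\hat w(X)}$ equals $\mathbb{E}_s[\exp(\dhat^{\top}\phi(\gbar))]$ divided by the empirical normalizer $\tfrac{1}{n_{s_1}}\sum_j\Enoisy{s}{\exp(\dhat^{\top}\phi(g))\mid\gtilde(x_j^{s_1})}$, because $\hat w$ is self-normalized by an empirical average of the \emph{same} function $\exp(\dhat^{\top}\phi(\cdot))$ that appears in its numerator. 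The whole difference therefore collapses to the deviation of the empirical normalizer from its population counterpart at the single (data-dependent) point $\dhat$; one Hoeffding bound with range $\exp(\|\dhat\|_1)$ gives the $\sqrt{\log(2/\alpha)/n_s}$ rate, and dividing by the normalizer produces exactly $\Mhat=\exp(\|\dhat\|_1)/(\text{normalizer})$. No gradient concentration, strong concavity, covering argument, or lower bound on the normalizer is needed. Your M-estimation route would also reach a bound of the right shape, but only under an extra non-degeneracy assumption (positive-definite Fisher information of the tilted exponential family) that the lemma does not supply, and it still faces the uniform-concentration and normalizer issues you correctly flag as the hard part. What your route buys is a more honest first step: H\"older gives $\E{s}{|\hat w-w(\gbar)|}$ with the absolute value inside, whereas the paper's cancellation only works without it and quietly pushes that slack into $c_{s,\loss}$. (Both arguments share the unaddressed subtlety that $\dhat$ depends on $\D_{s_1}$ and $\D_t$, so the summands in the empirical normalizer are not independent of it when Hoeffding is invoked.)
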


\begin{proof}
$|\Lgbar - \mathbb{E}_s[\Lghat]|$ can be written as $|\E{s}{(w(\bar{g}(X)) - \hat{w}(X)) \loss(X, Y)} |$, where $\hat{w}(X)$ is constructed from Algorithm \ref{alg:noise_kliep}. We bound this by $c_{s, \loss} \E{s}{w(\bar{g}(X)) - \hat{w}(X)}$, where $c_{s, \loss}$ is a constant dependent on $\p_s$ and how the loss function changes across $x, y$. $\E{s}{w(\bar{g}(X))} = 1$, and recall that $\hat{w}(X) = \Enoisy{s}{\hat{w}(g) | \gtilde(X)} = \frac{\Enoisy{s}{\exp(\dhat^\top \phi(g)) | \gtilde(X)}}{\frac{1}{n_{s_1}} \sum_{j = 1}^{n_{s_1}} \Enoisy{s}{\exp(\dhat^\top \phi(g)) | \gtilde(x_j^{s_1})}}$. Then, the difference to bound is equivalent to
\begin{align}
     |\Lgbar - \mathbb{E}_s[\Lghat]| \le \frac{ c_{s, \loss} \big|\frac{1}{n_{s_1}} \sum_{j = 1}^{n_{s_1}} \mathbb{E}^\sigma_{s}[\exp(\dhat^\top \phi(g)) | \gtilde(x_j^{s_1})] - \mathbb{E}_{s}[\exp(\dhat^\top \phi(\bar{g}))]\big|}{\frac{1}{n_{s_1}} \sum_{j = 1}^{n_{s_1}} \mathbb{E}^\sigma_{s}[\exp(\dhat^\top \phi(g)) | \gtilde(x_j^{s_1})]}, \label{eq:gbar_diff}
\end{align}
\noindent where $\mathbb{E}_{s}[\mathbb{E}^\sigma_{s}[\exp(\dhat^\top \phi(g)) | \gtilde(X)]] = \mathbb{E}_{s}[\exp(\dhat^\top \phi(\bar{g}(X)))]$ by definition of $p(\bar{g})$ and $\mathbb{E}_s^\sigma$. Define $\epsilon = \\\frac{1}{n_{s_1}} \sum_{j = 1}^{n_{s_1}} \mathbb{E}^\sigma_{s}[\exp(\dhat^\top \phi(g)) | \gtilde(x_j^{s_1})] - \mathbb{E}_{s}[\exp(\dhat^\top \phi(\bar{g}(X)))]$. Our bound becomes
\begin{align}
    |\Lgbar - \mathbb{E}_s[\Lghat]| \le \frac{ c_{s, \loss} |\epsilon|}{\frac{1}{n_{s_1}} \sum_{j = 1}^{n_{s_1}} \mathbb{E}^\sigma_{s}[\exp(\dhat^\top \phi(g)) | \gtilde(x_j^{s_1})]}. \label{eq:gbar_diff2}
\end{align}

Note $\max_{\bar{g}} \dhat^\top \phi(\bar{g}) = \norm{\dhat}_1$ and $\E{}{\epsilon} = 0$, so applying Hoeffding's inequality gives us $|\epsilon| \le \exp \lt \norm{\dhat}_1 \rt \sqrt{\frac{\log(2/\alpha)}{2n_{s_1}}}$ with probability at least $1 - \alpha$. Plugging this into \eqref{eq:gbar_diff2},
\begin{align*}
    |\Lgbar - \mathbb{E}_s[\Lghat]| \le \frac{c_{s, \loss} \exp\lt \norm{\dhat}_1\rt \sqrt{\frac{\log(2/\alpha)}{2n_{s_1}}}}{\frac{1}{n_{s_1}} \sum_{j = 1}^{n_{s_1}} \mathbb{E}^\sigma_{s}[\exp(\dhat^\top \phi(g)) | \gtilde(x_j^{s_1})]}.
\end{align*}

Furthermore, $\Mhat =\sup_X \hat{w}(g(X)) = \frac{\exp(\left|\dhat\right|_1)}{\frac{1}{n_{s_1}} \sum_{j = 1}^{n_{s_1}} \mathbb{E}^\sigma_{s}[\exp(\dhat^\top \phi(g)) | \gtilde(x_j^{s_1})]}$. Then, using this definition and the fact that $n_{s_1} = \frac{n_s}{2}$, we arrive at our desired bound,
\begin{align*}
    |\Lgbar - \mathbb{E}_s[\Lghat]| \le c_{s, \loss}  \Mhat \sqrt{\frac{\log(2/\alpha)}{n_s}}.
\end{align*}
\end{proof}

\begin{lemma} 
Setting $n_{s_2} = \frac{n_s}{2}$, then with probability at least $1 - \alpha$,
\begin{align*}
    |\mathbb{E}_s[\Lghat] - \Lghat | \le \Mhat \sqrt{\frac{\log(2 / \alpha)}{n_s}}.
\end{align*}
\end{lemma}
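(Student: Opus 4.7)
The plan is to treat this as a standard concentration of the empirical mean bound, exploiting the split of $\D_s$ into $\D_{s_1}$ (used to learn $\hat w$) and $\D_{s_2}$ (used to form $\Lghat$). Conditional on $\D_{s_1}$, the function $\hat w$ is fixed, so $\Lghat = \frac{1}{n_{s_2}}\sum_{i=1}^{n_{s_2}} \hat w(x_i^{s_2})\,\loss(x_i^{s_2}, y_i^{s_2})$ is an average of i.i.d.\ random variables drawn from $\p_s$, and $\mathbb{E}_s[\Lghat]$ is precisely the mean of each summand. So the quantity we want to bound is a deviation of an empirical mean from its true mean.

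First I would record the boundedness of each summand. By construction $\hat w(x) = \Enoisy{s}{\hat w(g) \mid \gtilde(x)}$, and since $\hat w(g) \le \Mhat$ by definition of $\Mhat = \sup_X \hat w(g(X))$, the conditional expectation is also bounded by $\Mhat$. Combined with the standing assumption $0 \le \loss(x,y) \le 1$ used earlier in the proof, each summand $\hat w(x_i^{s_2})\,\loss(x_i^{s_2}, y_i^{s_2})$ lies in the interval $[0, \Mhat]$.

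Next I would apply Hoeffding's inequality conditional on $\D_{s_1}$: for i.i.d.\ random variables taking values in $[0, \Mhat]$,
\begin{align*}
    \Pr\!\left(\bigl|\mathbb{E}_s[\Lghat] - \Lghat\bigr| \ge t\right) \le 2 \exp\!\left(-\frac{2 n_{s_2} t^2}{\Mhat^2}\right).
\end{align*}
Setting the right-hand side equal to $\alpha$ and solving for $t$ gives $t = \Mhat \sqrt{\log(2/\alpha)/(2 n_{s_2})}$. Plugging in $n_{s_2} = n_s/2$ yields exactly the stated bound $\Mhat\sqrt{\log(2/\alpha)/n_s}$. Since the conditional bound holds with probability at least $1-\alpha$ uniformly in $\D_{s_1}$, it also holds unconditionally by marginalizing.

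There is essentially no main obstacle here beyond bookkeeping: the split of $\D_s$ was introduced in Section~\ref{subsec:density-ratio-estimation} precisely to decouple estimation of $\hat w$ from its subsequent evaluation, which guarantees the independence needed for Hoeffding. The only subtle point worth flagging is the measurability of $\Mhat$, which is itself a function of $\D_{s_1}$ and $\D_t$ but not of $\D_{s_2}$, so it behaves as a constant inside the conditional Hoeffding step.
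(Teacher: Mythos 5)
Your proof is correct and takes essentially the same route as the paper, which simply states that the result "follows from a standard application of Hoeffding's inequality using the definition of $\Mhat$ as an upper bound on $\hat{w}$"; you have filled in exactly the intended details (conditioning on $\D_{s_1}$ so that $\hat{w}$ and $\Mhat$ are fixed, bounding the summands in $[0,\Mhat]$, and solving for $t$ with $n_{s_2}=n_s/2$). The arithmetic checks out and the conditioning/measurability remark is the right way to make the "standard application" rigorous.
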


\begin{proof}
This result follows from a standard application of Hoeffding's inequality using the definition of $\Mhat$ as an upper bound on $\hat{w}$.
\end{proof}

\noindent Using the above lemmas and taking a union bound gives us the desired bound for Theorem \ref{thm:main}.

Lastly, to understand $\Mhat$, we discuss two things: first, as $n_s \wedge n_t \ra \infty$, $\Mhat \xrightarrow{p} \sup w(\gbar(X))$. Second, $w(\gbar(X))$ is bounded in terms of $\sup w(X)$ and properties of the correction matrices.

\begin{proposition}
$\Mhat \xrightarrow{p} \sup_X w(\gbar(X))$ as $n_s \wedge n_t \ra \infty$.
\end{proposition}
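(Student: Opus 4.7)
\textbf{Proof plan for $\Mhat \xrightarrow{p} \sup_X w(\gbar(X))$.}

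The strategy is to view $\dhat$ as an M-estimator and invoke standard consistency + continuous mapping. First, I would rewrite the KLIEP objective by pulling out the $\log n_{s_1}$ constant so that $\hatfkliep(\delta,\sigma)$ (up to an additive constant independent of $\delta$) equals
\begin{equation*}
\frac{1}{n_t}\sum_{i=1}^{n_t}\Enoisy{t}{\delta^\top\phi(g)\mid\gtilde(x_i^t)}
\;-\;\log\Bigl(\frac{1}{n_{s_1}}\sum_{j=1}^{n_{s_1}}\Enoisy{s}{\exp(\delta^\top\phi(g))\mid\gtilde(x_j^{s_1})}\Bigr).
\end{equation*}
Because $\Enoisy{t}{\,\cdot\mid\gtilde}$ and $\Enoisy{s}{\,\cdot\mid\gtilde}$ are, by definition of $\gbar$, conditional expectations whose outer expectations equal $\E{t}{\delta^\top\phi(\gbar)}$ and $\E{s}{\exp(\delta^\top\phi(\gbar))}$ respectively, the weak law of large numbers (applied on a compact ball in $\delta$, which is enough because $\phi(g)$ is bounded on the finite support of $g$) yields uniform convergence of $\hatfkliep(\cdot,\sigma)$ to
\begin{equation*}
f^{*}(\delta)\;=\;\E{t}{\delta^\top\phi(\gbar)}\;-\;\log\E{s}{\exp(\delta^\top\phi(\gbar))}.
\end{equation*}

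Second, $f^{*}$ is strictly concave (the second term is a log-partition of an exponential family in $\delta$) so its maximizer $\delta^{*}$ is unique, and by the standard argmax consistency theorem $\dhat\xrightarrow{p}\delta^{*}$. The first-order condition at $\delta^{*}$ matches the moments of $\phi(\gbar)$ under $\p_t$ with the tilted moments under $\p_s$, which is precisely the KLIEP condition; under the working assumption that $p_t(\gbar)/p_s(\gbar)$ lies in the exponential family spanned by $\phi$ (the same modeling assumption used throughout Section~\ref{subsec:density-ratio-estimation}), this identifies
\begin{equation*}
\frac{p_t(\gbar=v)}{p_s(\gbar=v)}
\;=\;
\frac{\exp\bigl(\delta^{*\top}\phi(v)\bigr)}{\E{s}{\exp(\delta^{*\top}\phi(\gbar))}}
\qquad\text{for every }v\in\{-1,1\}^k.
\end{equation*}

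Third, I would apply the WLLN once more to the denominator in Algorithm~\ref{alg:noise_kliep}, giving $\frac{1}{n_{s_1}}\sum_{j}\Enoisy{s}{\exp(\dhat^\top\phi(g))\mid\gtilde(x_j^{s_1})}\xrightarrow{p}\E{s}{\exp(\delta^{*\top}\phi(\gbar))}$, where $\dhat\to\delta^{*}$ is absorbed using boundedness of $\phi$ and a uniform-in-$\delta$ Glivenko-Cantelli argument. Since $g(X)$ and $\gbar(X)$ share the finite support $\{-1,1\}^k$, both $\Mhat$ and $\sup_X w(\gbar(X))$ reduce to a maximum over finitely many values of $v$, so by the continuous mapping theorem
\begin{equation*}
\Mhat
\;=\;
\max_{v}\frac{\exp(\dhat^\top\phi(v))}{\frac{1}{n_{s_1}}\sum_{j}\Enoisy{s}{\exp(\dhat^\top\phi(g))\mid\gtilde(x_j^{s_1})}}
\;\xrightarrow{p}\;
\max_{v}\frac{p_t(\gbar=v)}{p_s(\gbar=v)}
\;=\;
\sup_X w(\gbar(X)).
\end{equation*}

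The main obstacle is step two, namely identifying the limiting parameter $\delta^{*}$ with the parameter that generates the true density ratio of $\gbar$. This needs strict concavity of $f^{*}$ (to pin down a unique maximizer), plus the modeling assumption that $p_t(\gbar)/p_s(\gbar)$ sits in the exponential family $\exp(\delta^\top\phi(\cdot))$ from \eqref{eq:model_g}; without the latter, $\delta^{*}$ would only define the KL-projection of the density ratio onto the parametric family, and the convergence would be to $\sup_X$ of that projection rather than to $\sup_X w(\gbar(X))$ itself. Uniform convergence and the continuous-mapping step are comparatively routine since $\phi$ is bounded and all suprema are over a finite set.
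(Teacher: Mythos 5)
Your proposal is correct and follows essentially the same route as the paper's (informal) proof: establish consistency $\dhat \xrightarrow{p} \dbar$ to the population maximizer of the KLIEP objective, identify $\dbar$ with the parameter generating $p_t(\gbar)/p_s(\gbar)$ via the modeling assumption that $\gbar$ lies in the same exponential family as $g$, and conclude by continuous mapping over the finite support. The only difference is that you spell out the uniform-convergence and argmax-consistency steps (and correctly flag the exponential-family identification as the crux), whereas the paper delegates the consistency of $\dhat$ to a citation of \cite{kim2019twosample}.
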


\begin{proof} (Informal)
Since $w(\bar{g}(X)) = \frac{p_t(\bar{g}(X))}{p_s(\bar{g}(X))}$ and $\gbar$ follows the same graphical model structure as $g$, $w(\gbar(X)) = \frac{\exp(\dbar^\top \phi(\bar{g}(X)))}{\E{s}{\exp(\dbar^\top \phi(\bar{g}))}}$,  where $\dbar = \argmin_{\delta}{ \E{s}{\delta^\top \phi(\gbar)}} - \log \E{s}{\exp(\delta^\top \phi(\gbar))}$, which is also the solution to the population version of \eqref{eq:obj_hat}. \cite{kim2019twosample} show that $\dhat \xrightarrow{p} \dbar$ as $n_s \wedge n_t \ra \infty$ via an argument that \eqref{eq:obj_hat} converges pointwise to a maximum log-likelihood objective function, which yields a consistent estimator. Therefore, $\frac{\exp(\dhat^\top \phi(\bar{g}(X)))}{\frac{1}{n_{s_1}} \sum_{j = 1}^{n_{s_1}} \Enoisy{s}{\exp(\dhat^\top \phi(g)) | \gtilde(x_j^{s_1}) }} \xrightarrow{p} \frac{\exp(\dbar^\top \phi(\gbar(X)))}{\E{s}{\exp(\dbar^\top \phi(\gbar)}}$, which implies convergence of their suprema over $X$.
\end{proof}
 
 \begin{lemma}
 $\sup_X w(\gbar(X)) \le r M$, where $M = \sup_X \frac{p_t(g(X))}{p_s(g(X))}$ and $r = \prod_{i = 1}^{k'} \frac{1 + \eta_t^{\max}(i)}{1 - \eta_s^{\max}(i)}$.
 \label{lemma:rm}
 \end{lemma}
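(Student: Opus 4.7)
\textbf{Proof plan for Lemma~\ref{lemma:rm}.}
The plan is to unfold the definition of $p(\gbar)$, separate the $k'$ uncorrected slices from the rest, and show that the multiplicative distortion picked up on the first $k'$ slices is exactly the $r$ appearing in the statement, with the residual ratio being dominated by $M$.

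First I would write $p_s(\gbar) = \int \prod_{i=1}^{k} \sigma_s^i(\gbar_i, \gtilde_i)\, p_s(\gtilde)\, d\gtilde$ and similarly for $p_t$. Since the user corrects all but the first $k'$ slices, for $i > k'$ we have $\sigma_s^i(\gbar_i, \gtilde_i) = p_s(g_i = \gbar_i \mid \gtilde_i)$ and $\sigma_t^i(\gbar_i, \gtilde_i) = p_t(g_i = \gbar_i \mid \gtilde_i)$. For $i \le k'$, the hypothesis $\bigl|\tfrac{p_t(g_i \mid \gtilde_i) - \sigma_t^i(g_i,\gtilde_i)}{p_t(g_i \mid \gtilde_i)}\bigr| \le \eta_t^{\max}(i)$ gives the pointwise upper bound $\sigma_t^i(\gbar_i,\gtilde_i) \le (1+\eta_t^{\max}(i))\, p_t(g_i = \gbar_i \mid \gtilde_i)$, and analogously $\sigma_s^i(\gbar_i,\gtilde_i) \ge (1-\eta_s^{\max}(i))\, p_s(g_i = \gbar_i \mid \gtilde_i)$.

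Plugging these bounds in and noting that the scalars $(1\pm\eta^{\max}(i))$ do not depend on $\gtilde$, they factor out of the integrals, yielding
\begin{align*}
\frac{p_t(\gbar)}{p_s(\gbar)}
\;\le\; \prod_{i=1}^{k'} \frac{1+\eta_t^{\max}(i)}{1-\eta_s^{\max}(i)}
\cdot \frac{\int \prod_{i=1}^{k} p_t(g_i = \gbar_i \mid \gtilde_i)\, p_t(\gtilde)\, d\gtilde}
{\int \prod_{i=1}^{k} p_s(g_i = \gbar_i \mid \gtilde_i)\, p_s(\gtilde)\, d\gtilde}.
\end{align*}
Using the conditional independence $p(g\mid\gtilde)=\prod_i p(g_i\mid\gtilde_i)$ implicit in the graphical model of Section~\ref{subsec:density-ratio-estimation}, the two integrals simplify to $p_t(g = \gbar)$ and $p_s(g = \gbar)$ respectively. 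Hence $\tfrac{p_t(\gbar(X))}{p_s(\gbar(X))} \le r \cdot \tfrac{p_t(g = \gbar(X))}{p_s(g = \gbar(X))}$ pointwise in $X$.

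Finally, taking $\sup_X$ on both sides and observing that $\gbar(X)$ takes values in the same discrete range as $g$, so the second factor is bounded above by $M = \sup_X \tfrac{p_t(g(X))}{p_s(g(X))}$, gives $\sup_X w(\gbar(X)) \le rM$ as required. The main obstacle I anticipate is bookkeeping: making sure the direction of the relative-error bounds is set up to push the numerator up and denominator down simultaneously, and checking that the conditional-independence factorization through $\gtilde$ really applies so that the ``corrected'' slices integrate back out exactly into the true marginal ratio.
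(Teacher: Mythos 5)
Your proof is correct, but it takes a genuinely different route from the paper's. The paper factors $\frac{p_t(\gbar)}{p_s(\gbar)} = \frac{p_t(g)}{p_s(g)}\cdot\frac{p_t(\gbar)/p_t(g)}{p_s(\gbar)/p_s(g)}$ and then invokes the log-sum inequality to bound each marginal ratio $\frac{p_t(\gbar)}{p_t(g)}$ by a product over slices of $\max\big\{\frac{p_t(\gbar_i\mid\gtilde_i=1)}{p_t(g_i\mid\gtilde_i=1)},\frac{p_t(\gbar_i\mid\gtilde_i=-1)}{p_t(g_i\mid\gtilde_i=-1)}\big\}$, each of which is then controlled by the relative-error hypotheses. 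You instead unfold the definition $p(\gbar)=\int\prod_i\sigma^i(\gbar_i,\gtilde_i)\,p(\gtilde)\,d\gtilde$, bound each $\sigma^i$ pointwise by $(1\pm\eta^{\max}(i))\,p(g_i\mid\gtilde_i)$, and let the constants factor out of the integral so that the residual is exactly $p(g=\gbar)$ by the conditional-independence factorization. This is more elementary and arguably cleaner: because the $\eta$ bounds are uniform over $\gtilde_i$, the log-sum inequality is unnecessary, and no $\max$ over $\gtilde_i=\pm1$ is needed. What the paper's heavier machinery buys is reuse---the same per-slice conditional-ratio bounds are the $a_i,b_i,c_i,d_i$ quantities that drive the proof of the $|\Lg-\Lgbar|$ lemma, so the log-sum argument is stated once and amortized. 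One further point in your favor: your lower bound on the source side correctly uses $1-\eta_s^{\max}(i)$, which matches the lemma statement and is the right worst-case direction; the paper's own proof (and its main-text definition of $r$) writes $1-\eta_s^{\min}(i)$ at the corresponding step, which does not follow from $\min\{\cdot\}\ge 1-\eta_s^{\min}(i)$ and appears to be an internal inconsistency that your derivation avoids.
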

 
 \begin{proof}
 We can write $\frac{p_t(\gbar)}{p_s(\gbar)}$ as $\frac{p_t(g) \cdot \frac{p_t(\gbar)}{p_t(g)}}{p_s(g) \cdot \frac{p_s(\gbar)}{p_s(g)}}$. By the log sum inequality, we have 
 \begin{align}
     \frac{p_t(\gbar)}{p_t(g)} \le \prod_{i = 1}^{k'} \max \bigg\{ \frac{p_t(\bar{g}_i | \gtilde_i = 1)}{p_t(g_i | \gtilde_i = 1)},  \frac{p_t(\bar{g}_i | \gtilde_i = -1)}{p_t(g_i | \gtilde_i = -1)} \bigg\} \le \prod_{i = 1}^{k'} (1 + \eta_t^{\max}(i)), \\
     \frac{p_s(\gbar)}{p_s(g)} \ge \prod_{i = 1}^{k'} \min \bigg\{ \frac{p_s(\bar{g}_i | \gtilde_i = 1)}{p_s(g_i | \gtilde_i = 1)},  \frac{p_s(\bar{g}_i | \gtilde_i = -1)}{p_s(g_i | \gtilde_i = -1)} \bigg\} \ge \prod_{i = 1}^{k'} (1 - \eta_s^{\min}(i)), \label{eq:Mhat_lb}
 \end{align}
 where \eqref{eq:Mhat_lb} comes from from taking the reciprocal of an upper bound on $\frac{p_s(g)}{p_s(\gbar)}$. Then $\frac{p_t(\gbar)}{p_s(\gbar)} \le \frac{p_t(g)}{p_s(g)} \cdot \prod_{i = 1}^{k'} \frac{1 + \eta_t^{\max}(i)}{1 - \eta_s^{\min}(i)} \le rM$.
 \end{proof}
 

\section{Additional Experimental Details}

\subsection{Synthetic Experiments}
\label{app:synthetics}
\subsubsection{Support Shift Experiment}
In this synthetic experiment (described in Section~\ref{sec:synthetics}), we show how classifier-based importance weighting can perform poorly when the source and target distributions have little overlap. We study the case where the classifier is a logistic regression classifier. We note that if the supports are completely disjoint, logistic regression-based IW can actually still work well if self-normalization is used (since the logits output by the classifier will be bounded as long as appropriate regularization is used). However, if the supports are \emph{nearly} disjoint, but there are a very small number of source examples that do fall inside the support of the target distribution, the logistic regression classifier assigns very high weight to these examples compared to the rest, and thus reduce the effective sample size \citep{Owen2013MC}.

We generate data with a single binary $g_1$, and set $\theta_1$ for the source and target datasets such that $P(g_1 = 0) = 0.25$ on the source dataset and 0.75 on the target training set. We set the first component $X$ to be normally distributed conditioned on $g_1$, with mean $2g_1 - 1$ and random variance. We then append a ``spurious'' feature (``$a(X)$'') that is 1 with small probability $p$ for source examples, and 1 for \emph{all} target examples. We generate binary ``labels'' $Y$ for each datapoint following a logistic model with randomly generated coefficient on the first component of $X$. The goal is to estimate the average of $Y$ on the target dataset. We repeat this for different values of $p$, and for multiple random seeds each. We generate 10,000 points for both source and target datasets.

In Section~\ref{sec:synthetics}, we plot the mean absolute difference between the CBIW estimate of $\E{t}{Y}$ and the true value, divided by the mean absolute difference between $\E{s}{Y}$ and $\E{t}{Y}$. For small values of $p$, the CBIW estimate of the target value is actually even worse than simply using the estimate from the source dataset. By contrast, \name~only uses the $g(X)$ representation, so the spurious feature $a(X)$ does not affect its performance.

\subsubsection{High Dimensionality}
\begin{figure}[t!]
    \centering
    \includegraphics[width=0.5\linewidth]{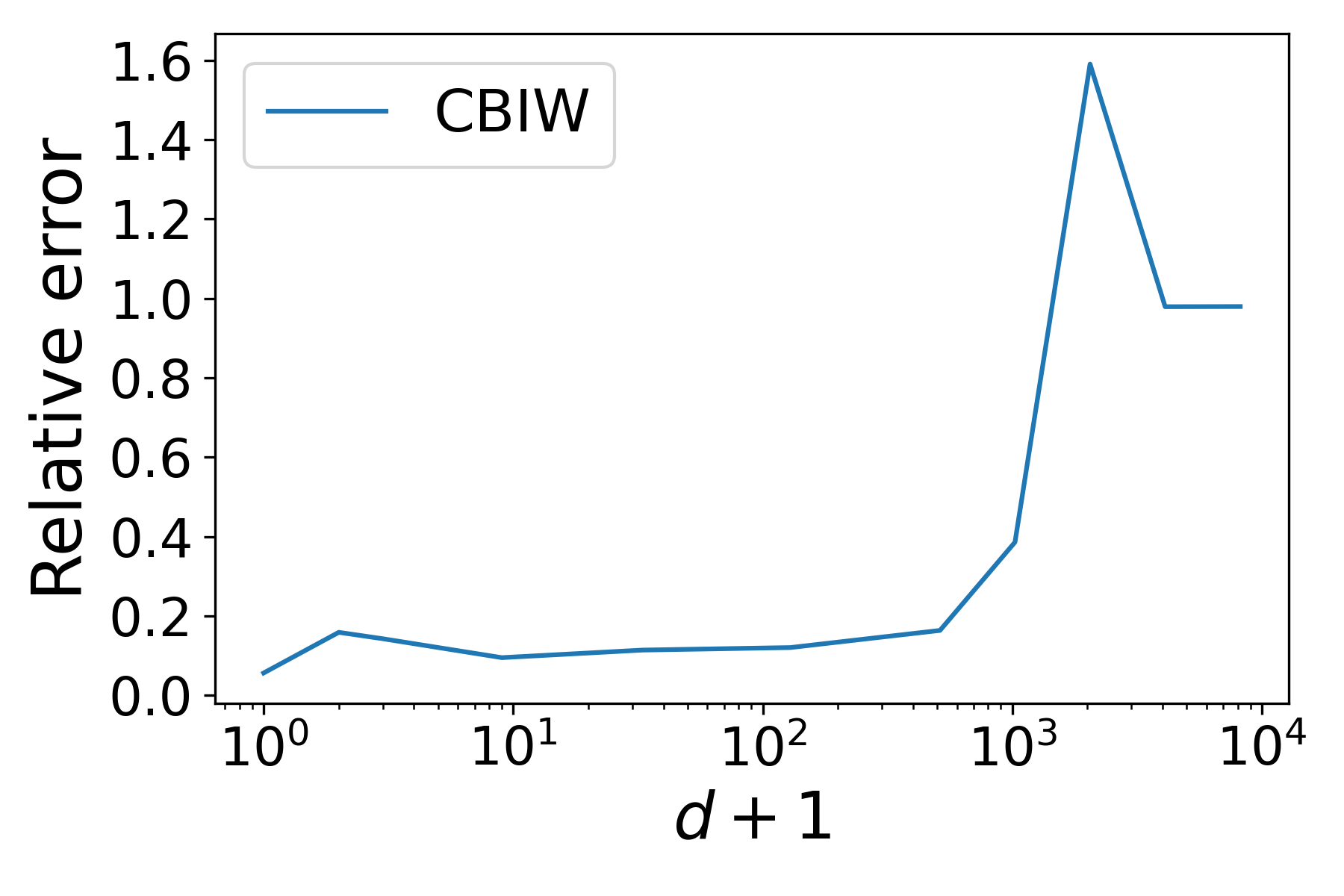}
    \caption[Relative Error of CBIW as Number of Irrelevant Features Increases]{Relative error of CBIW in the presence of $d$ additional irrelevant features.}
    \label{fig:dim-synthetic}
    
\end{figure}

Similarly, standard importance weighting baselines can struggle when the input data is very high-dimensional. As an example, we again generate data with a single binary $g_1$, and set $\theta_1$ for the source and target datasets such that $P(g_1 = 0) = 0.25$ on the source dataset and 0.75 on the target training set. Examples with $g_1 = 0$ are randomly sampled from the circle centered at $(-1, 0)$ with radius 1, and examples with $g_1 = 1$ are randomly sampled from the circle centered at $(0, 1)$ with radius 1. Then, we append $d$ additional ``irrelevant'' features drawn from $\N(0, 25 I_d)$. Intuitively, as the number of added features grows compared to the number of datapoints, a logistic regression classifier trained to distinguish between source and target examples can overfit to ``memorize'' points based on these features. We generate 10,000 points from the target dataset, and only 1,000 from the source dataset.

We generate random binary ``labels'' $Y$ for each datapoint following a logistic model with coefficients $1/\sqrt{2}$ on the first two components of $X$. The goal is again to estimate the average of $Y$ on the target dataset. (The true average is approximately 0.42, versus approximately 0.59 on the source dataset.) We repeat this for different values of $d$ (the number of ``irrelevant features''), and for multiple random seeds each. Results are plotted in Figure~\ref{fig:dim-synthetic}. As before, we plot the mean absolute difference between the CBIW estimate of $\E{t}{Y}$ and the true value, divided by the mean absolute difference between $\E{s}{Y}$ and $\E{t}{Y}$.  When there are only a few irrelevant features, CBIW performs well as it generally ignores them. When the number of irrelevant features is on the same order as the number of source datapoints or more, the estimation error rapidly rises, eventually being as bad or worse as simply using $\E{s}{Y}$ as an estimate of $\E{t}{Y}$. By contrast, the mean absolute difference between the \name~estimate of $\E{t}{Y}$ and the true value, divided by the mean absolute difference between $\E{s}{Y}$ and $\E{t}{Y}$, is only about 0.06 (comparable to that of the CBIW estimate when the number of irrelevant features $d = 0$), since \name\ works with the $g(X)$ representation and thus ignores these irrelevant features.

In many real-world tasks, such as medical image classification, the raw dimensionality of the data exceeds the number of datapoints (sometimes by orders of magnitude). In situations such as these, a classifier might simply ``memorize'' which examples come from the source vs. target distributions, thus obtaining poor importance weights as in this synthetic example.

\subsubsection{Noisy Slices} 

In this experiment (described in Section~\ref{sec:synthetics}), we demonstrate how correcting noisy slices can yield better estimates than when the slices are not corrected. Accurate correction matrices $\sigma_s^i, \sigma_t^i$ for each slice can recover the density ratio of $g$ to reweight with. However, if the slices $\gtilde$ are weakly correlated with the true $g$ and not corrected, the generated weights can be significantly different and result in an inaccurate estimate of $\Lg$.

We consider an example with $k = 2$ and one edge between the two slicing functions and construct datasets of size $n_s = n_t = 100000$ with distributions $p_s(g, \gtilde; \theta_s)$ and $p_t(g, \gtilde; \theta_t)$ according to \eqref{eq:model_joint} with randomly generated $0 \le \theta_s$, $\theta_t \le 1$. \name\ uses Algorithm \ref{alg:noise_kliep} with accurate correction matrices $\sigma_s^i = p_s(g_i | \gtilde_i)$ for each $i$ and for $\p_t$ as well. We compare this to a noise-unaware baseline where we run Algorithm \ref{alg:noise_kliep} with each correction matrix equal to the identity matrix, which entails that the user thinks $\gtilde$ and $g$ are the same. These weights are used to compute an estimate of a simple loss function $\loss(x, y)$ which takes on four values $[0.02, 0.2, 0.9, 0.2]$ depending on the values of $g_1(X), g_2(X) \in \{-1, 1\}$.

The performance of these two methods depends on what the true correlation between $g$ and $\gtilde$ is. We capture this correlation via the canonical parameter $\theta_{ii}$ for each $i$. That is, after randomly setting all other canonical parameters between $0$ and $1$, we uniformly set all $\theta_{ii}$ to one value and vary this between $0$ and $3$ to generate our datasets. When all $\theta_{ii}$ are equal to $0$, $g_i \indep \gtilde_i$ and no information can be extracted from only observing $\gtilde$. As $\theta_{ii}$ gets larger, the correlation between $g_i$ and $\gtilde_i$ increases until they are essentially the same. Our results in Figure \ref{fig:synthetic} confirm that for $\theta_{ii} = 0$, both \name\ and the noise-unaware version do not correct for the distribution shift. This is because $\Lghat$ has uniform weights since no information about $g$ can be extracted from the observable $\gtilde$. As $\theta_{ii}$ increases,  both methods approach relative error equal to $0$, since both $\gtilde$ and $g$ will be sufficiently accurate to reweight based on. However, for intermediate values of $\theta_{ii}$, there is a gap between the performance of \name\ and the noise-unaware approach. In these cases, the correlation between $g$ and $\gtilde$ is enough that learning from $\gtilde$ is meaningful, but still weak enough that noise-correction is important. 

\subsection{Experiments on Real Data}
\label{app:experiments}
We use PyTorch \citep{paszke2019pytorch} for all experiments.

\subsubsection{Baselines}
\label{app:baselines-first}
As described in Section~\ref{sec:experiments}, we compare the performance of \name~at estimating target accuracy compared to the baselines \textsc{CBIW}, \textsc{KMM}, and \textsc{uLSIF} run on neural network features. We also compare to \textsc{CBIW-FT} (CBIW where the entire neural network is fine-tuned) and \textsc{Source} (simply using the accuracy on the source dataset as an estimate of that on the target). Additionally, we can also run \textsc{CBIW}, \textsc{KMM}, and \textsc{uLSIF} on the \emph{slice} representations instead, i.e. swapping out the KLIEP-based stage of \name\ for another importance weighting algorithm. Finally, we also compare to the "simple" baseline of reweighting points based on the ratio of the frequency of their slice in the target dataset to the frequency of their slice in the source dataset. We discuss the baseline methods in more detail in Appendix~\ref{app:baselines}. 

\subsubsection{\celeba}
\para{Datasets.} We use the CelebA dataset\footnote{Available from \url{http://mmlab.ie.cuhk.edu.hk/projects/CelebA.html}} \citep{liu2015faceattributes}, which consists of 202,599 images of celebrity faces annotated with various metadata (e.g., gender, hair color, whether the image is blurry, etc.). We randomly allocate 1/3 of the dataset for training, and split the remainder into validation (``source''), and test (``target'') splits. The latter split was done to induce distribution shift between the source and target datasets: specifically, we allocate 80\% of (non-training) ``blurry'' images to the target dataset, along with enough non-blurry images so that the target is 30\% blurry images overall. The remainder of (non-training) images are allocated to the source dataset; this results in only approximately 1\% of source images being ``blurry''.

\para{Models}. We train ResNet-18 and ResNet-50 models on the training set for 5 epochs, starting from ImageNet-pretrained checkpoints (provided by PyTorch). We train with a learning rate of $0.0002$, weight decay of 0.0001, and batch size of 256 on 4 NVIDIA V100 GPUs (similar to the settings provided in \citep{sagawa2020distributionally}, but with the batch size doubled and correspondingly the learning rate as well). We train three separate models of each type, starting from different random seeds, and evaluate how well \name~and the baseline methods estimate performance of these models on the test (target) set by reweighting the validation (source) set predictions. Trial-by-trial results for each method are given in Table~\ref{tab:celeba-full}.

\para{Methods}. We use the default implementation of \name\ (with no noise correction factor, as we use the true provided metadata). As there is only one $g_i$ in this case (blurriness), there is no need to specify an edge list. For CBIW, we use the scikit-learn LogisticRegression function, with the default regularization strength of 1.0 and the L-BFGS optimizer; we train until convergence to the default tolerance is reached. For CBIW-FT, we train models for 5 epochs with the same hyperparameter settings as above, but on the concatenated source and target datasets, with the label 0 for source images and 1 for target images. For KMM, we use the publicly available implementation provided by \cite{fang2020rethinking}, while for uLSIF we use the \texttt{densratio} package \citep{densratio}; we randomly sample 10,000 examples each from the source and target datasets before running KMM and uLSIF due to the high computational / memory demands of these methods. We also compare to the slice-based baselines discussed in Appendix~\ref{app:baselines-first}. (Note that for CelebA, since there is only one slice under consideration, it can be shown that the ``Simple'' baseline is actually equivalent to \name\ [as reflected in the results].)

\begin{table}[]
    \centering
    \resizebox{\linewidth}{!}{%
        \begin{tabular}{l|c|c|c|c|c|c}
            \toprule
& RN-18 seed 0 & RN-18 seed 1 & RN-18 seed 2 & RN-50 seed 0 & RN-50 seed 1 & RN-50 seed 2 \\
\toprule
\emph{Target Accuracy} & 94.73 & 94.84 & 95.12 & 95.49 & 95.53 & 95.64 \\
\midrule
Source & 2.07 & 2.00 & 1.69 & 1.83 & 1.82 & 1.73 \\
\midrule
CBIW (features) & 0.59 & 0.62 & 0.20 & 0.35 & 0.68 & 0.56 \\
KMM (features) & 2.21 & 2.04 & 1.65 & 1.84 & 1.76 & 1.68 \\
uLSIF (features) & 2.22 & 2.04 & 1.65 & 1.84 & 1.76 & 1.68  \\
CBIW-FT & 0.57 & 0.58 & 0.11 & 0.41 & 0.42 & 0.32 \\
\midrule
CBIW (slices) & -0.22 & -0.07 & -0.18 & 0.17 & 0.19 & 0.12 \\
KMM (slices) & -0.45 & 0.54 & -0.14 & 0.94 & 0.80 & 0.64 \\ 
uLSIF (slices) & -0.44 & 0.55 & -0.13 & 0.93 & 0.80 & 0.65 \\
Simple (slices) & -0.22 & -0.08 & -0.18 & 0.16 & 0.19 & 0.12 \\
\name\ (slices) &  -0.22 & -0.08 & -0.18 & 0.16 & 0.19 & 0.12 \\
\bottomrule
        \end{tabular}

    }
    \caption[\name{} Full CelebA Results]{Error in accuracy estimates for each CelebA model for all methods, for all model runs. True target \emph{accuracy} in first column.}
    \label{tab:celeba-full}
\end{table}

\subsubsection{\civil}
\para{Datasets}. The \civil dataset\footnote{The dataset can be downloaded e.g. by following the instructions at \url{https://github.com/p-lambda/wilds}.} \citep{borkan2019nuanced} contains comments labeled ``toxic'' or ``non-toxic'', along with 8 metadata labels on whether a particular identity (male, female, LGBTQ, etc.) is referenced in the text. (Any number of these metadata labels can be true or false for a given comment.) The original dataset has 269,038 training, 45,180 validation, and 133,782 test datapoints.

\para{Preprocessing}. We modify the test (target) set to introduce distribution shift by randomly subsampling examples for each ``slice'' (subset of data with a given assignment of metadata labels), with different proportions per slice. Specifically, for each of the $2^8$ possible slices, we pick a uniform random number from 0 to 1 and keep only that fraction of examples in the test set, discarding the rest. We do this for three different random seeds to produce three different ``shifted'' datasets.

\para{Models}. We fine-tune BERT-Base-uncased for 5 epochs on the \civil training dataset, using the implementation and hyperparameters provided by \citep{koh2020wilds}. We evaluate how well \name\ and the baseline methods estimate the accuracy of this models on the different test (target) set generated as described above.

\para{Methods}. We use the default implementation of \name\ (with no noise correction factor). We identify the top $4$ entries by magnitude in the inverse covariance matrix of the $\tilde{g}_i$'s and use these as our edge list. For CBIW, we again use the scikit-learn LogisticRegression function with the default regularization strength of 1.0 and the L-BFGS optimizer. For CBIW-FT, we fine-tune BERT-base-uncased for 1 epoch with the same hyperparameter settings on the concatenated source and target datasets, and with the label 0 for source images and 1 for target images instead of the true labels. (Surprisingly, we found that training for more epochs actually caused CBIW-FT to do worse on this dataset, in terms of the final estimation error.) For KMM and uLSIF, we use the implementation from \citep{fang2020rethinking} and randomly subsample 10,000 examples each from the source and target datasets before running due to the high computational and/or memory demands. We also compare to the slice-based baselines discussed in Appendix~\ref{app:baselines-first}. 

\noindent Trial-by-trial results for each method are given in Table~\ref{tab:civil-full}.

\begin{table}[]
    \centering
        \begin{tabular}{l|c|c|c}
            \toprule
& Dataset seed 0 & Dataset seed 1 & Dataset seed 2 \\
\toprule
\emph{Target Accuracy} & 93.42 & 89.46 & 93.40 \\
\midrule
Source & -0.92 & 3.04 & -0.90 \\
\midrule
CBIW (features) & 0.01 & -0.00 & 0.08 \\
KMM (features) & -0.11 & 3.60 & -0.03 \\
uLSIF (features) & -0.59 & -0.03 & -0.55 \\
CBIW-FT & 0.10 & -0.52 & 0.03 \\
\midrule
CBIW (slices) & -0.09 & -0.57 & 0.10 \\
KMM (slices) & 0.04 & 0.15 & -0.02 \\ 
uLSIF (slices) & 0.15 & 0.72 & 0.04 \\
Simple (slices) & 0.09 & -0.01 & 0.16 \\
\name\ (slices) & 0.05 & -0.18 & 0.13 \\
\midrule
CBIW (noisy slices) & -0.15 & -0.07 & 0.02  \\
KMM (noisy slices) & -0.12 & 0.44 & -0.05 \\ 
uLSIF (noisy slices) & 0.10 & 0.88 & 0.08 \\
Simple (noisy slices) & 0.01 & 0.35 & 0.07 \\
\name\ (noisy slices) & -0.04 & 0.22 & 0.03 \\
            \bottomrule
        \end{tabular}%
    \caption[\name{} Full \civil Results]{Error in \civil model accuracy estimates for all methods, for each of the three randomly generated target datasets.}
    \label{tab:civil-full}
\end{table}

\subsubsection{\snlimnli}

\para{Datasets.} We use the \snli and the \mnli matched validation sets, which consist of $10000$ and $9815$ examples respectively.

\para{Models Evaluated.} We consider $8$ models from the Huggingface Model Hub\footnote{\url{https://huggingface.co/models}}. We provide the model identifiers, and performance across methods in Table~\ref{tab:snlimnli-perf-by-model} and \ref{tab:snlimnli-perf-by-model-2}.

\para{Performance Metrics.} We consider estimation of $2$ performance metrics: 
\begin{enumerate}
    \item \textit{Standard Accuracy.} Average classification accuracy across all $3$ NLI classes: entailment, neutral and contradiction.
    \item \textit{Binary Accuracy.} Average classification accuracy across $2$ classes: entailment and non-entailment, where non-entailment consists of examples labeled either neutral or contradiction.
\end{enumerate}

\para{Slices.} We consider $9$ slices in total, derived from two sources of information. Suppose that ${\bf p} = f_\theta(x)$ are the class probabilities assigned by $f$ to $x$. Then,
\begin{enumerate}
    \item \textit{Model Predictions (3 slices).} We use the slice $g_j(x) = \mathbf{1}[\argmax_{i \in [3]}{\bf p} = j]$ for $j \in [3]$ i.e. all examples where the model assigns the label to be $j$.
    \item \textit{Model Entropy (6 slices).} We calculate the entropy of the output predictions $H({\bf p}) = - \sum_{i \in [3]} p_i \log p_i$. We then consider slices $g_j(x) = \mathbf{1}\left[H({\bf p}) \in [0.2(j - 1), 0.2j]\right]$ for $j \in [6]$ i.e. all examples where the model's prediction entropy lies in a certain interval.
\end{enumerate}

Intuitively, slice statistics over model predictions act as a noisy indicator for the label, and potentially capture spurious associations made by the model. Model entropy captures where the model is more or less uncertain, e.g. an increase in high entropy examples is a good indicator that the distribution has shifted significantly.

\noindent {\bf Methods.} We use the default implementation of \name\ (with no noise correction factor). We compare \name\ to \cbiw, \kmm and \cbiwft. For \cbiw, we use the LogisticRegression function from scikit-learn, with default regularization ($C=1.0$), training until convergence. For \kmm, we use the implementation provided by \citep{fang2020rethinking}, with a kernel width of $1.0$. For \ulsif, we use the implemented provided by \citep{fang2020rethinking}. For \cbiwft, we train a \texttt{bert-base-uncased} model using the Huggingface Trainer for $3$ epochs.

We also compare to applying the same baselines (aside from CBIW-FT which is not applicable) directly on the \emph{slice} representations, rather than the features, i.e. swapping out the KLIEP-based stage of \name\ for another importance weighting algorithm. We additionally compare to the ``simple" baseline of reweighting points based on the ratio of the frequency of their slice in the target dataset to the frequency of their slice in the source dataset. These results can be found in Table~\ref{tab:snlimnlifull}.

\newcommand{\downarrowgreen}{{\color{green} \downarrow}}
\newcommand{\downarrowred}{{\color{red} \downarrow}}
\newcommand{\uparrowgreen}{{\color{green} \uparrow}}
\newcommand{\uparrowred}{{\color{red} \uparrow}}
\newcommand{\approxgreen}{{\color{green} \approx}}
\newcommand{\approxred}{{\color{red} \approx}}
\begin{table}[]
    \centering
    \resizebox{\linewidth}{!}{%
        \begin{tabular}{l|c|c|c|c|c|c|c}
            \toprule
            Model  & \snli & \mnli & \name & \cbiw & \kmm & \cbiwft & \ulsif \\
            \midrule
            {\tt ynie/roberta-large-snli\_mnli\_fever\_anli\_R1\_R2\_R3-nli}
            & 91.09 
            & 89.88 $\downarrow$ 
            & 88.70 $\downarrowgreen$ 
            & 92.12 $\uparrowred$ 
            & 91.01 $\approxred$
            & 90.49 $\downarrowgreen$ 
            & 90.98 $\downarrowgreen$
            \\
            {\tt textattack/bert-base-uncased-snli}
            & 89.51 
            & 73.88 $\downarrow$ 
            & 79.94 $\downarrowgreen$ 
            & 91.76 $\uparrowred$ 
            & 88.45 $\downarrowgreen$
            & 88.70 $\downarrowgreen$ 
            & 89.93 $\uparrowred$
            \\
            {\tt facebook/bart-large-mnli}
            & 87.48 
            & 90.18 $\uparrow$ 
            & 87.28 $\approxred$ 
            & 89.30 $\uparrowgreen$ 
            & 87.89 $\uparrowgreen$
            & 86.71 $\downarrowred$ 
            & 87.43 $\approxred$
            \\
            {\tt textattack/bert-base-uncased-MNLI}
            & 78.58 
            & 84.58 $\uparrow$ 
            & 79.58 $\uparrowgreen$ 
            & 82.32 $\uparrowgreen$ 
            & 79.03 $\uparrowgreen$
            & 77.75 $\downarrowred$ 
            & 77.87 $\downarrowred$
            \\
            {\tt huggingface/distilbert-base-uncased-finetuned-mnli}
            & 74.76 
            & 82.25 $\uparrow$ 
            & 77.57 $\uparrowgreen$ 
            & 75.91 $\uparrowgreen$ 
            & 76.47 $\uparrowgreen$
            & 75.19 $\uparrowgreen$ 
            & 74.22 $\downarrowred$
            \\
            {\tt prajjwal1/albert-base-v1-mnli}
            & 72.33 
            & 80.12 $\uparrow$ 
            & 76.81 $\uparrowgreen$ 
            & 73.78 $\uparrowgreen$ 
            & 72.34 $\approxred$
            & 73.07 $\uparrowgreen$ 
            & 72.00 $\downarrowred$
            \\
            {\tt cross-encoder/nli-deberta-base}
            & 90.67 
            & 88.24 $\downarrow$ 
            & 89.21 $\downarrowgreen$ 
            & 92.55 $\uparrowred$ 
            & 90.36 $\downarrowgreen$
            & 89.64 $\downarrowgreen$ 
            & 90.89 $\uparrowred$
            \\
            {\tt squeezebert/squeezebert-mnli}
            & 76.18 
            & 82.92 $\uparrow$ 
            & 77.77 $\uparrowgreen$ 
            & 79.37 $\uparrowgreen$ 
            & 76.59 $\uparrowgreen$
            & 75.35 $\downarrowred$ 
            & 76.68 $\uparrowgreen$
            \\
            \bottomrule
        \end{tabular}%
    }
    \caption[\name{} \snlimnli Results: Standard Accuracy]{Standard accuracy estimates for $8$ models from the Huggingface Model Hub across all methods. Direction of adjustment with respect to the original \snli estimate is indicated by $\uparrow/\downarrow/\approx$. {\color{green} Green} indicates that the adjustment was done in the correct direction, while {\color{red} red} indicates the adjustment was in the wrong direction.}
    \label{tab:snlimnli-perf-by-model}
\end{table}

\begin{table}[]
    \centering
    \resizebox{\linewidth}{!}{%
        \begin{tabular}{l|c|c|c|c|c|c|c}
            \toprule
           Model  & \snli & \mnli & \name & \cbiw & \kmm & \cbiwft & \ulsif \\
            \midrule
            {\tt ynie/roberta-large-snli\_mnli\_fever\_anli\_R1\_R2\_R3-nli}    
            & 93.89 
            & 93.58 $\downarrow$ 
            & 92.32 $\downarrowgreen$
            & 93.80 $\approxred$
            & 93.67 $\downarrowgreen$
            & 94.05 $\approxred$
            & 93.45 $\downarrowgreen$
            \\
            {\tt textattack/bert-base-uncased-snli}                             
            & 93.10 
            & 83.78 $\downarrow$ 
            & 86.37 $\downarrowgreen$
            & 93.62 $\uparrowred$
            & 92.46 $\downarrowgreen$
            & 93.28 $\approxred$
            & 92.84 $\downarrowgreen$
            \\
            {\tt facebook/bart-large-mnli}                                      
            & 92.32 
            & 94.05 $\uparrow$ 
            & 92.15 $\approxred$
            & 90.88 $\downarrowred$
            & 92.22 $\approxred$
            & 91.80 $\downarrowred$
            & 91.63 $\downarrowred$
            \\
            {\tt textattack/bert-base-uncased-MNLI}                             
            & 87.75 
            & 90.77 $\uparrow$ 
            & 88.30 $\uparrowgreen$
            & 88.01 $\uparrowgreen$
            & 87.09 $\downarrowred$
            & 87.28 $\downarrowred$
            & 86.36 $\downarrowred$
            \\
            {\tt huggingface/distilbert-base-uncased-finetuned-mnli}            
            & 86.81 
            & 89.52 $\uparrow$ 
            & 88.00 $\uparrowgreen$
            & 84.57 $\downarrowred$
            & 86.85 $\approxred$
            & 87.36 $\uparrowgreen$
            & 85.11 $\downarrowred$
            \\
            {\tt prajjwal1/albert-base-v1-mnli}                                 
            & 84.64 
            & 88.22 $\uparrow$ 
            & 87.53 $\uparrowgreen$
            & 83.13 $\downarrowred$
            & 83.12 $\downarrowred$
            & 85.11 $\uparrowgreen$
            & 82.79 $\downarrowred$
            \\
            {\tt cross-encoder/nli-deberta-base}                                
            & 93.72 
            & 92.92 $\downarrow$ 
            & 92.72 $\downarrowgreen$
            & 93.76 $\approxred$
            & 93.74 $\approxred$
            & 93.74 $\approxred$
            & 93.45 $\downarrowgreen$
            \\
            {\tt squeezebert/squeezebert-mnli}                                  
            & 87.24 
            & 89.75 $\uparrow$ 
            & 88.30 $\uparrowgreen$
            & 86.63 $\downarrowred$
            & 86.48 $\downarrowred$
            & 86.70 $\downarrowred$
            & 86.37 $\downarrowred$
            \\
            \bottomrule
        \end{tabular}%
    }
    \caption[\name{} \snlimnli Results: Binary Accuracy]{Binary accuracy estimates for $8$ models from the Huggingface Model Hub across all methods. Direction of adjustment with respect to the original \snli estimate is indicated by $\uparrow/\downarrow/\approx$. {\color{green} Green} indicates that the adjustment was done in the correct direction, while {\color{red} red} indicates the adjustment was in the wrong direction.}
    \label{tab:snlimnli-perf-by-model-2}
\end{table}

\begin{table}[]
    \centering
    \resizebox{0.8\linewidth}{!}{%
        \begin{tabular}{l|c|c|c|c|c}
            \toprule
           Task  &  \name & \cbiw & \kmm & \cbiwft & \ulsif \\
            \midrule
            \celeba (ResNet-50)
            & {\bf 0.03s}
            & 32s
            & 185s
            & 856s 
            & 361s
            \\
            \civil
            & {\bf 37.6s}
            & 158s
            & 306s
            & 2350s 
            & 371s
            \\
            \snlimnli
            & {\bf 0.05s}
            & 0.25s
            & 63.25s
            & 239s
            & 13.38s
            \\
            \bottomrule
        \end{tabular}%
    }
    \caption[\name{} Runtimes]{Average total time taken by each method to evaluate a single model for the given tasks.}
    \label{tab:time-taken}
\end{table}

\begin{table}[h]
    \centering
    \begin{sc}
    \resizebox{0.8\linewidth}{!}{%
    \begin{tabular}{c|c|c|c|c}
        \toprule
         \multirow{2}{*}{Method} & \multicolumn{2}{c}{Standard Accuracy} & \multicolumn{2}{c}{Binary Accuracy}\\
          & Avg. Error & Max. Error & Avg. Error & Max. Error\\
         \midrule
        {\sc Source} & $6.2\% \pm 3.8\%$ & $15.6\%$ & $3.0\% \pm 2.3\%$ & $9.3\%$  \\
        \midrule
        \cbiw$^\dagger$ & $5.5\% \pm 4.5\%$  & $17.9\%$ & $3.7\% \pm 2.5\%$ & $9.8\%$  \\
        \kmm$^\dagger$ & $5.7\% \pm 3.6\%$ & $14.6\%$ & $3.3\% \pm 2.3\%$ & $8.7\%$  \\
        uLSIF~$^\dagger$ & $6.4\% \pm 3.9\%$ & $16.0\%$ & $3.7\% \pm 2.4\%$ & $9.1\%$ \\
        \midrule
        Simple~$^\ddagger$ & ${\bf 3.5\% \pm 1.6\%}$ & ${\bf 5.9\%}$ & ${\bf 1.6\% \pm 0.7\%}$ & ${\bf 2.7\%}$ \\
        \cbiw~$^\ddagger$ & $6.3\% \pm 3.8\%$ & $15.7\%$ & $3.1\% \pm 2.4\%$ & $9.7\%$ \\
        \kmm~$^\ddagger$ & ${\bf 3.5\% \pm 1.6\%}$ & ${\bf 5.9\%}$ & ${\bf 1.6\% \pm 0.7\%}$ & ${\bf 2.7\%}$ \\
        \ulsif~$^\ddagger$ & $6.4\% \pm 4.8\%$ & $18.9\%$ & $3.1\% \pm 3.4\%$ & $11.6\%$ \\
        \name~$^\ddagger$ & ${\bf 3.6\% \pm 1.6\%}$ & ${\bf 5.9\%}$ & ${\bf 1.6\% \pm 0.7\%}$ & ${\bf 2.7\%}$  \\
        \bottomrule
        \multicolumn{5}{r}{$\dagger$ use raw features, $\ddagger$ use slices}
    \end{tabular}%
    }
    \end{sc}
    \caption[\name\ \snlimnli Accuracy Estimates: Baseline Evaluation on Slice Representations]{Estimating standard and binary accuracy on {\sc MNLI} using {\sc SNLI}. Average and maximum estimation errors for $8$ models are reported, with $95\%$ confidence intervals. This table extends Table~\ref{tab:snlimnli} to include ablations where baseline methods were run directly on slices as well. This validates that the slice-based representation provides a significant advantage over traditional feature-based approaches.}
    \label{tab:snlimnlifull}
\end{table}

\subsubsection{\snliphansm}

\para{Datasets.} We use the \snli and the \hans validation sets, which consist of $10000$ and $30000$ examples respectively. We further process them to construct the \snlip and \hansm datasets as detailed next.

\para{Preprocessing.} We randomly sample $1\%$ ($300$ examples) of the \hans data and add it to \snli to construct the \snlip dataset. The remaining $99\%$ of the \hans dataset constitutes the \hansm dataset.

\para{Models Evaluated.} We consider the same $8$ models as \snlimnli. 

\para{Slices.} We add slices based on contradiction (based on negation and token ordering), sentence structure (word substitutions, length differences, verb tense) and the evaluated model's uncertainty (similar to \snlimnli).


\subsubsection{IMDB}
{\bf Datasets.} For the source dataset, we use the test split of the IMDB sentiment classification dataset~\citep{maas2011learning}. For the target datasets, we use (i) Counterfactual IMDB test dataset~\citep{kaushik2020learning}; (ii) Sentiment 140 test dataset~\citep{Sentiment140}; (iii) the first $2000$ examples from the Yelp Polarity reviews test dataset~\citep{zhangCharacterlevelConvolutionalNetworks2015}, taken from Huggingface datasets; (iv) the first $2000$ examples from the Amazon Polarity reviews test dataset~\citep{zhangCharacterlevelConvolutionalNetworks2015}, taken from Huggingface datasets.

\noindent {\bf Models Evaluated.} We consider $3$ models taken from the Huggingface Model Hub: \\ {\tt textattack/bert-base-uncased-imdb}, {\tt textattack/roberta-base-imdb}, and \\ {\tt textattack/distilbert-base-uncased-imdb}.

\noindent  {\bf Slices.} We use the same task-agnostic slices as \snlimnli, with model predictions and model entropy.



\subsection{Runtime}
We include runtime information in Table~\ref{tab:time-taken}. For \snlimnli, we run all experiments on a \texttt{g4dn.2xlarge} machine on AWS. For \civil, we run all experiments on a \texttt{p3.2xlarge} on AWS, and for \celeba we run on a \texttt{p3.8xlarge} on AWS. Both \cbiw and \name\ run quite fast, while the other methods take significantly more time---\cbiwft fine-tunes a neural network rather than training a simple logistic classifier, and KMM is very compute- and memory-intensive (even after we downsample to 10,000 points as in the case of \civil and \celeba). Unsurprisingly, \name~runs especially fast when there are fewer slicing functions (for instance on \celeba).

\subsection{Baseline Analysis} \label{app:baselines}

While the primary contribution of \name\ is to correct distribution shift based on slices, our experiments have also highlighted the effects of using different methods for computing the density ratio on the slices. Comparisons among \cbiw, \kmm, \ulsif, and \llkliep have been done before, as discussed below:
\begin{itemize}
    \item \cbiw using logistic regression yields weights with the same parametric form as \llkliep (i.e. log-linear)~\citep{Tsuboi2009}. \cbiw has lower asymptotic variance when the distribution belongs to the exponential family but does worse than \llkliep when the exponential family is misspecified~\citep{kanamori2010}.
    \item \kmm is relatively slow (as suggested by Table \ref{tab:time-taken}) and needs quite a bit of fine-tuning on the kernel and regularization parameters. In the setting we study in which the target dataset labels are unknown, this fine-tuning cannot be done with cross-validation.
    \item \ulsif differs from \llkliep in that it uses the squared loss rather than the log loss and thus can be more efficient~\citep{JMLR:v10:kanamori09a}. However, \llkliep's loss has a more natural interpretation for exponential distributions over the binary slices and allows us to explicitly address correlations and noise among the slices.
\end{itemize}

Lastly, perhaps the most intuitively obvious way to use slice information to reweight accuracy estimates is to simply reweight each point by the ratio of the frequency of its slice vector in the target dataset to the frequency of its slice vector in the source dataset. (For example, if there are three slices $g_1,g_2,g_3$, the source dataset and target dataset both have 100 total examples, the source dataset has 10 examples with $g_1 = 1, g_2 = 0, g_3 = 1$, and the target dataset has 5 such examples, we would reweight all these examples by 1/2.) This is the ``simple" baseline mentioned in Appendix~\ref{app:baselines-first}. While this method is extremely simple to run, it can introduce a substantial amount of noise due to not exploiting any possible structure of the relationships between the slices; with $k$ binary slices, there can be $2^k$ unique slice vectors, and one parameter must be estimated for each one that is observed in the source dataset.

By contrast, in our graphical model, many fewer parameters may need estimation. For example, suppose that the slices are actually independent. Suppose for simplicity that our source dataset is arbitrarily large (to isolate the effect of sampling noise in the target dataset), and that the slices are noiseless. Suppose that there are $k$ individual slicing functions, where each is an independent Bernoulli random variable, with probability $1/2$ on the source dataset and $p$ on the target dataset (with $p \in (0, 1)$ unknown). Additionally, suppose that a datapoint is ``correct" exactly when all of the slicing functions evaluate to 1. In this (contrived) setting, we could simply estimate the accuracy on the target \emph{distribution} by counting the number of datapoints in the target set for which all slicing functions evaluate to 1 (which is equivalent to the accuracy on the target dataset), and dividing by the number of target datapoints $n_t$. This is a random variable with mean $p^k$ (the true target accuracy) and variance $p^k(1-p^k) / n_t$. However, the slice information can actually help us obtain a \emph{better} estimate than this. For instance, we could instead estimate the empirical mean of each slicing function on the target set independently; these would be RVs with mean $p$ and variance $p(1-p)/n_t$. Then, we could compute the product as our estimate of the probability of all slices being 1; by independence, the mean of the result is $p^k$ and the variance is $\left( \frac{p(1-p)}{n_t}+p^2 \right)^k - p^{2k}$. (It can be shown that in this specific case, \name~with an empty edgeset $E = \emptyset$ is equivalent to the latter approach in the $n_s \ra \infty$ limit.) When $k$ is large, the variance of the naive approach is $\approx p^k / n_t$, while the variance of the latter approach is smaller: $\approx p^{2k-1}(1-p)/n_t$ when $n_t$ is also large. This informal argument highlights that, as the number of slices grows, explicit knowledge of the dependence structure of the slicing functions (in this case, full independence) can significantly improve the quality of estimates.
\section{Extended Related Work}
\label{app:ext-related}
\subsection{Correcting for Bias in Observational Studies} Studies in many fields suffer from selection bias when the subjects are not representative of the target population. For instance, polling is biased due to sample demographics not aligning with those of the true population~\citep{isakov2020towards}. In other disciplines ranging from public policy to health services, investigators cannot control which individuals are selected to receive a treatment in observational studies unlike in randomized controlled trials. Therefore, there may be a significant difference in the treatment outcomes of individuals that were selected and those that were not. \textit{Propensity scores}, the probability of an individual being assigned treatment given their characteristics, are a standard way to correct for this bias in the causal inference literature. Two common methods are \emph{matching}, where each treated subject is matched with an untreated subject based on similarity of their propensity scores~\citep{rosenbaummatching}, and \emph{weighting}, where estimators of the treatment effect are weighted by functions of the propensity score~\citep{HiranoImbens2001}. The latter method is technically similar to IW; in fact, the weights used for estimating the average treatment effect for the control (ATC) are proportional to the density ratio used in importance weighting if we consider the control group as the target distribution~\citep{LiMorganZaslavsky2018}. However, a key difference between propensity score weighting and \name\ is that in observational studies estimates are reweighted using a small known set of demographics of subjects, whereas in \name\ we first use slicing functions to identify these relevant covariates.

\subsection{Distribution Shift} Distribution shift has been addressed in various problem settings. Domain adaptation tackles how to train a model to handle distribution shift, which usually involves unsupervised methods that perform additional training using unlabeled target data. In particular, the loss function is reweighted (such as via importance weighting!)~\citep{pmlr-v97-byrd19a}, or includes some additional terms or constraints to capture discrepancy between the source and target~\citep{long16, JMLR:v20:15-192}. Distributionally Robust Optimization (DRO) tackles a similar problem involving distribution shift, where a model is trained to optimize over and be robust over an uncertainty set of distributions rather than a single specified target distribution~\citep{Ben-Tal2013, duchi2018statistics}. In contrast, our goal is only to evaluate existing models, regardless of how they were trained. We assume the model to be a fixed ``black box'' and estimate its performance using unlabeled target data. A key point of our approach is the ability to cheaply compare arbitrary models on one or more target distributions, without requiring any re-training.

Several other works have focused on structured distribution shift in ways similar to our setup in Section \ref{subsec:model-dist-shift}. A latent structure consisting of shifting and non-shifting components has been examined in DRO~\citep{subbaswamy2020evaluating, pmlr-v80-hu18a}, but the structure does not consider relevance of the components to the task. On the other hand, the dimensionality reduction approach in~\cite{SUGIYAMA201044} has been extended to consider relevance of features to $Y$~\citep{pmlr-v89-stojanov19a}, but this lacks the interpretability of binary slices. More recently, \cite{polo2020covariate} propose a task-aware feature selection approach to determine inputs to importance weighting. \name\ describes a structure on the distributions that addresses both shifting versus non-shifting and irrelevant versus relevant components. Furthermore, a subtle but important distinction is that our categorization is on user-defined slices rather than features or known latent variables, which naturally suggests slice design as an iterative process in the evaluation framework whereas other approaches consider static input to extract properties from.  


While our work focuses on the \emph{covariate shift} setting, another commonly studied form of distribution shift is \emph{label shift}, which assumes that $p_s(x | y) = p_t(x | y)$ but $p_s(y) \neq p_t(y)$. Correcting for label shift requires density ratio estimation approaches different from standard IW~\citep{lipton2018detecting} and is an interesting setting for future work in model evaluation.

\subsection{Density Ratio Estimation} In addition to the methods discussed in Section \ref{sec:IW}, recent approaches have focused on reducing the variance of the importance weights. \cite{LiRobust2020} combine KMM with nonparametric regression to reduce variance of the former and bias of the latter, and \cite{Rhodes2020} estimate a series of lower-variance density ratios, whose product telescopes into the desired density ratio. Another method is to discard outliers in the datasets to improve the boundedness of the weights~\citep{TrimmedEstimation2017}. While these methods produce more accurate estimates on target distributions, they do not do so by directly addressing the cause of high variance weights, which \name{} does by only reweighting on relevant shifting properties of the data.  

\subsection{Active Evaluation} Inspired by active learning~\citep{Settles2012Active}, active evaluation methods tackle the evaluation problem by creating a small, labeled test set as a proxy for testing the model's performance on the fully labeled target distribution~\citep{Nguyen2018ActiveTest, Rahman2020EfficientTest}. These methods use a model's predictions on the target distribution to guide the selection of examples to be manually labeled, reducing the cost to construct a test set on the target distribution. Several active evaluation methods have been proposed; for example, \cite{Taskazan2020GrandfatherTestSet} apply active evaluation to the domain shift problem by stratifying the source distribution, weighting the inferred strata according to the target distribution, and then sampling examples to label using the inferred weights. Instead of requiring costly annotation of individual examples from the target distribution, \name\ lets the user provide high-level guidance in the form of \textit{slicing functions}, which automatically label the dataset with noisy labels, and then estimates performance on the target set automatically.

\subsection{Other Applications of Importance Weighting \& Importance Sampling}
While \textsc{Mandoline} focuses on the use and adaptation of importance weighting for evaluation of classification models, importance weighting and the closely related concept of importance sampling have a rich history of application in many domains in addition to those mentioned above. Importance sampling is a crucial component of efficient simulation in many domains, used to reduce variance and thereby enable computational savings \citep{192731}. Similarly, importance weighting is a key technique in reinforcement learning, for off-policy value function estimation \citep{suttonbarto}: given episodes from a behavior policy, observed returns can be reweighted by the importance ratio to estimate expected rewards following a different \emph{target} policy.

\end{document}